%% file: main.tex
\documentclass[10pt,letterpaper,twocolumn]{article}
\usepackage[margin=0.75in]{geometry}
\usepackage{times}   %
\usepackage{helvet}  %
\usepackage{courier}  %
\usepackage[hyphens]{url}  %
\usepackage{graphicx} %
\usepackage{natbib}  %
\usepackage{caption} %
\usepackage{algorithm}
\usepackage{algorithmic}

\usepackage{abstract}

\input{latex_definitions/local_latex_stuff}

\usepackage{newfloat}
\usepackage{listings}
\DeclareCaptionStyle{ruled}{labelfont=normalfont,labelsep=colon,strut=off} %
\floatstyle{ruled}
\newfloat{listing}{tb}{lst}{}
\floatname{listing}{Listing}

\usepackage{authblk}

\author[1,2]{Eric Günther}
\author[3]{Balázs Szabados}
\author[1]{Kristof Meding}
\author[1,2]{Gunnar König}
\author[1,2]{Sebastian Bordt}
\author[1,2]{Ulrike von Luxburg}

\affil[1]{University of Tübingen}
\affil[2]{Tübingen AI Center}
\affil[3]{HUN-REN Institute for Computer Science and Control (SZTAKI), Budapest, Hungary
\texttt{\{eric.guenther, kristof.meding, gunnar.koenig, sebastian.bordt, ulrike.luxburg\}@uni-tuebingen.de, balazs.szabados@sztaki.hu}}

\title{We Need Explanation Cards to Connect Explanation Algorithms to the Real World}

\begin{document}

\maketitle

\begin{abstract}
Algorithmic explanations are intended to help stakeholders understand opaque algorithmic decisions, but in practice, they often fall short. First, the meaning of algorithmic explanations is often not what one might intuitively expect, so expert knowledge is required to interpret them correctly. Second, recent work has shown that popular explanation algorithms are uninformative about the behavior of complex decision functions.
Together, these issues create a gap between what explanations appear to convey and what they actually provide.
In this work, we propose {\it Explanation Cards for Explanation Algorithms}, which augment standard explanations with complementary information about robustness and validity, as well as clear instructions for interpretation. The complementary information can render otherwise uninformative explanations practically useful, while also helping to detect cases where they are not. Importantly, the interpretation instructions in explanation cards shift responsibility from users to providers: Rather than expecting users to recognize what can and cannot be concluded from an explanation, providers must make this explicit upfront. Using counterfactual explanations and SHAP as examples, we demonstrate how providers can construct explanation cards and that these cards provide users with the guidance needed for sound interpretation. We further argue that explanation cards offer a practical means of operationalising the explainability provisions of the EU AI Act.
Overall, explanation cards are a significant step toward making explanation algorithms fit for real-world use cases.
\end{abstract}
\newpage

\maketitle

\section{Introduction}

Explainable machine learning (XAI) methods are meant to help stakeholders understand opaque, black-box machine learning models. For example, XAI methods have been discussed as a tool towards enabling transparency for users affected by algorithmic decisions, and they have also been discussed as tools to debug algorithmic decisions \citep{Ribeiro16,WachterEtal17}. At the same time, the field of explainable machine learning faces fundamental criticism, and various authors have advocated against their use in high-stakes settings \citep{lipton2018mythos,rudin2019stop, krishnan2020against,krishna2022disagreement,freiesleben2023dear,corbett2023interrogating,schmitt2024role,deck2024critical,skirzynski2025discrimination,BorRaiLux25}.

\paragraph{Challenge of Misinterpretation.}
A prominent criticism is that algorithmic explanations lend themselves to {misinterpretation} \citep{ghassemi2021false,BorRaiLux25}, leading stakeholders to derive wrong conclusions from explanations \citep{molnar2020general,BorFinRaiLux22,freiesleben2023dear,Huang23,geirhos2024,tomaszewska2024Position,khadar2025explain}. 
One cause of misinterpretation is that the meaning of explanations is often not what one would intuitively expect \citep{Bhatt20}. Instead, detailed expert knowledge is required to translate them into justified insight \citep{khadar2025explain}. This is particularly problematic in high-stakes settings, where explanations are intended to help non-expert stakeholders understand how a machine learning model arrived at a particular decision \citep{BorFinRaiLux22}. 
For example, \citet{ghassemi2021false} notes in a medical context that the ``interpretability gap of explainability methods relies on humans to decide what a given explanation might mean''. 

\paragraph{Technical Challenges of Explaining Complex Models.}
Research on the technical properties of explanation algorithms has highlighted significant drawbacks in many popular algorithms. For example, it has been shown that the output of explanation algorithms is sensitive to seemingly irrelevant changes in the model that one aims to explain \citep{Dombrowski19,slack2020fooling}. More generally, work has shown that strong assumptions about the underlying model are required for explanations to be practically useful, and that the explanations themselves provide no means of assessing whether those assumptions are met \citep{GarLux20,BorLux_shap_2023}. In particular, \citet{gunther2025informative} has demonstrated that local feature attributions are only informative about models that are sufficiently simple. Taken together, these results suggest that {in settings where the model is complex, algorithmic explanations may not justify any intuitive interpretation at all}.

\paragraph{\bf Explanation Cards for Explanation Algorithms.}
In this paper, we propose a pragmatic solution that addresses both the problem of model complexity and misinterpretation: {Explanation Cards for Explanation Algorithms}. Explanation Cards augment the output of standard explanation algorithms with complementary information about their robustness and validity, 
and provide clear instructions for how the output of the explanation algorithms should be interpreted. From a technical perspective, this complementary information renders the explanations informative. 
From the perspective of the stakeholder who receives the explanation card, the complementary information makes the explanation practically useful whenever possible, while also helping to detect cases where it is not. Most importantly, the instructions for interpretation shift responsibility from the user to the provider of the explanation: Rather than expecting the user to recognize what can and cannot be concluded from an explanation, providers must make this explicit upfront.\\

{\bf Our key contributions are the following: }
\begin{itemize}
    \item We suggest using explanation cards to alleviate challenges in interpreting the output of common explanation algorithms. To construct such cards, one needs to first derive technical guarantees about the specific explanation, and then translate these technical guarantees to natural-language statements that outline to a user which concrete interpretations can and cannot be drawn from the given explanation. 
    \item We derive the necessary mathematical guarantees to construct two example explanation cards: an explanation card for counterfactual explanations for loan decisions  (Section~\ref{sec:counterfactual-explanations}), and an explanation card for SHAP explanations in medical diagnosis (Section~\ref{sec:SHAP}). 
    \item In a legal analysis, we show how the use of explanation cards could help to satisfy the transparency and explainability standards set by the EU AI Act (Section~\ref{sec:legal}). 
\end{itemize}

\section{Related Work}
\label{sec:related-work}

In this paper, we focus on post-hoc explanation algorithms \citep{Ribeiro16, Lundberg17, WachterEtal17}. They are widely used, for example, for model debugging \citep{adebayo2020debugging}, but also in high-stakes scenarios like medicine \citep{borys2023explainable}, recourse recommendations \citep{karimi2022survey} or detecting discrimination \citep{agarwal2018automated}. For a general overview of the big field of explainable AI (XAI), we refer the reader to \citet{molnar2025}.

\paragraph{Counterfactual Explanations.} The first explanation algorithm discussed in this work is {counterfactual explanations} \citep{WachterEtal17}. Counterfactual explanations have gained significant attention because they align with how humans naturally reason about cause and effect \citep{miller2019explanation,byrne2019counterfactuals}. They have also been proposed as a recourse mechanism, for example, in credit lending \citep{ustun2019actionable}. However, it is also known that counterfactuals lack robustness: An adversary can train models with unstable counterfactuals \citep{slack2021counterfactual} and generally, small changes in the observation can lead to different explanations \citep{Fokkema23}. Furthermore, changes in one feature may have unexpected downstream causal effects on other features \citep{karimi2021algorithmic, karimi2022survey,konig2023improvement,konig2025performative} and a counterfactual may be invalidated if changed slightly \citep{dominguez2022adversarial}. More recently, it was shown that counterfactuals are not informative for arbitrarily complex functions \citep{gunther2025informative}. These insights have led to a number of attempts to make counterfactuals more robust \citep{artelt2021evaluating, zhang2023density, leofante2024promoting}. Particularly related to our work are regional counterfactuals \citep{fernandez2020random,vannostrand2023facet}, which have been proven to be easier to understand for a subject \citep{vannostrand2024actionable}.  %

\paragraph{SHAP Explanations.} The second explanation algorithm discussed in this work is {SHAP} \citep{Lundberg17}. SHAP has become one of the most widely used post-hoc explanation algorithms due to its ease of use and model-agnostic applicability \citep{lundberg2018consistent, molnar2025}. However, the algorithm has also been frequently criticized \citep{Kumar20}: Similar to counterfactual explanations, SHAP is sensitive to off-manifold behavior \citep{frye2020shapley, taufiq2023manifold} and can be manipulated by an adversary \citep{slack2020fooling, hwang2025shap}. Furthermore, the presence of variable interactions limits the interpretation of the SHAP attributions, which has given rise to algorithmic variants that also attribute interactions \citep{lundberg2018consistent, zhang2021interpreting, BorLux_shap_2023, fumagalli2023shap}. Techniques like these also exist for global methods that aim to explain the data \citep{konig2024disentangling}. But even for models without interactions, SHAP values remain uninformative for complex functions due to their potential instability \citep{gunther2025informative}. We hence present several SHAP explanations, presenting their stability, and give additional information that reveals whether they connect to the prediction, which is only possible if the model does not exhibit strong interactions. %

\paragraph{Legal Discussion Around Explainability.} In the context of high‑stakes decisions, the connection to the {legal domain} has also been widely examined: Explainability requirements have been discussed in relation to the General Data Protection Regulation (GDPR) \cite{WachterEtal17,hamon2022bridging,selbst2018meaningful}, various drafts and the final version of the AI Act \cite{panigutti2023role,kaminski2025right}, and their combined implications \cite{hacker2020varieties,ebers2020regulating,nannini2024habemus}.

\paragraph{Explanation Cards} are inspired by model cards \citep{mitchell2019model} and datasheets \citep{gebru2021datasheets}. They are also closely related to saliency cards, proposed by \citet{boggust2023saliency}. However, we develop explanation cards for tabular data rather than for images, and tailor them to a non-expert audience. Explanation cards move beyond previous approaches that enhance explanation algorithms with robustness and stability measures \citep{bhatt2021evaluating, velmurugan2021evaluating, Dasgupta22, dhurandhar2024trustRegions} because they both enrich the explanations with additional information {\it and} focus on how this information can be communicated to stakeholders in real-world applications. In this respect, explanation cards provide a step towards user-centric explainability \citep{miller2019explanation, liao2021human, freiesleben2023dear, schor2024meaningful, hullman2025explanationsmeansend}.

\section{Problem Setup}
\label{sec:problem-setup}

We consider machine learning algorithms that make predictions in social contexts. Specifically, we consider the setup where an algorithm is trained on {\bf tabular data} to predict individual characteristics, such as a medical diagnosis or the probability of loan repayment. Each dimension of the input encodes a different property of a person, for example, their age, income, or blood pressure. The machine learning algorithm is used to learn a {\bf decision function} $f: \mathbb{R}^d \to \mathbb{R}$, outputting a score or a probability $f(x_0)$ for an individual $x_0$. This score can be transformed to a binary prediction (“receives the loan” or “does
not receive the loan”). In addition to the prediction, the provider of the machine learning system provides an explanation. In this work, we consider {\bf counterfactual explanations}, introduced by \citet{WachterEtal17}, and {\bf SHAP}, introduced by \citet{Lundberg17}. Counterfactual explanations provide another datapoint close to $x_0$ that has a different prediction. SHAP provides a scalar feature importance value for each dimension of the input.

Our main research question is whether individuals who are affected by machine predictions can make use of the explanations to answer %
questions that are \textbf{practically relevant} \citep{liao2021human,BorRaiLux25}. For example, a loan applicant might want to know whether earning more would be sufficient to be granted the loan, and a doctor might want to know how a change in a patient's blood pressure would change the predicted outcome. 

We also ask whether the provided explanations satisfy the {\bf informativeness} condition from \citet{gunther2025informative}. Informativeness is a mathematical notion that quantifies whether an explanation provides any information about the behavior of the decision function. Informativeness can be seen as a necessary condition on explanation algorithms, as a non-informative explanation does not provide a robust description of the predictive behavior of the decision function. A key insight of this framework is that explanations tend to be informative if one can construct local regions in which the explanations behave benignly, for example, they only vary by a small amount. These regions will show up on our explanation cards below, for example, as "region of stability" in the case of counterfactuals.

In the mathematical parts of the paper, we consider a dataspace $\Xcal\subseteq\R^d$ with a probability distribution $\mathrm{P}$. For the set of all features, we write $[d]:=\{1,...,d\}$. We denote the features of a point $x$ by $(x^{(1)},...,x^{(d)})$. If we consider a subset of features $S=\{j_1,...,j_{|S|}\}\subseteq[d]$ of some point $x\in\Xcal$, we write $x^{(S)}:=(x^{(j_1)},...,x^{(j_{|S|})})$. We aim to explain the prediction of the function $f$ at the point $x_0\in\Xcal$. We assume that $f$ is a real-valued scoring function from which we can derive a final binary prediction by thresholding, $t$ denotes  the threshold with respect to which the prediction is derived.

\section{Explanation Cards for Counterfactual Explanations}
\label{sec:counterfactual-explanations}

\begin{figure*}[t]
     \centering
     \begin{subfigure}[b]{0.32\textwidth}
         \centering
         \includegraphics[width=\textwidth]{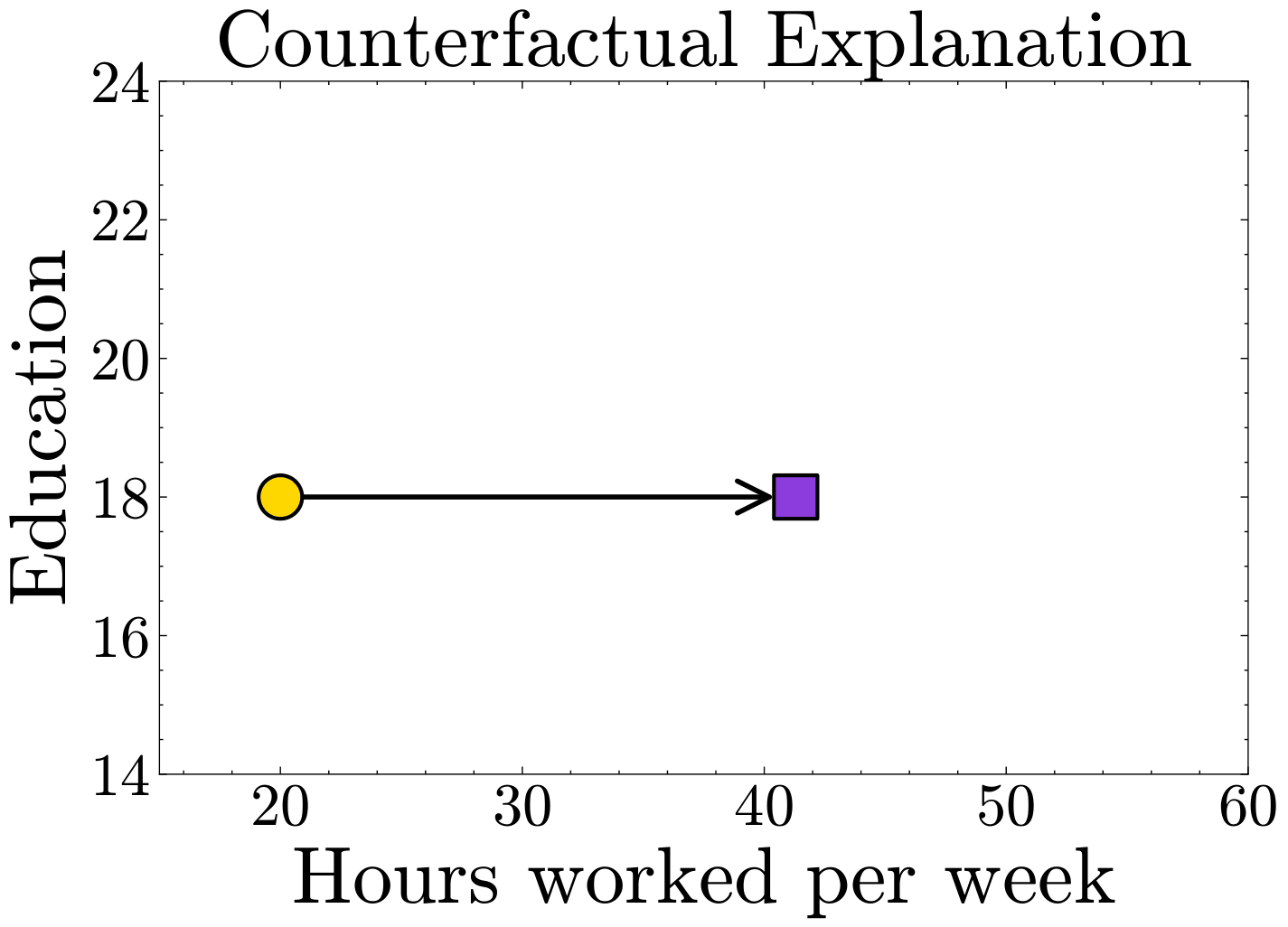}
         \caption{A recipient obtains a counterfactual explanation.}
         \label{fig:counterfactual_plain}
     \end{subfigure}
     \hfill
     \begin{subfigure}[b]{0.32\textwidth}
         \centering
         \includegraphics[width=\textwidth]{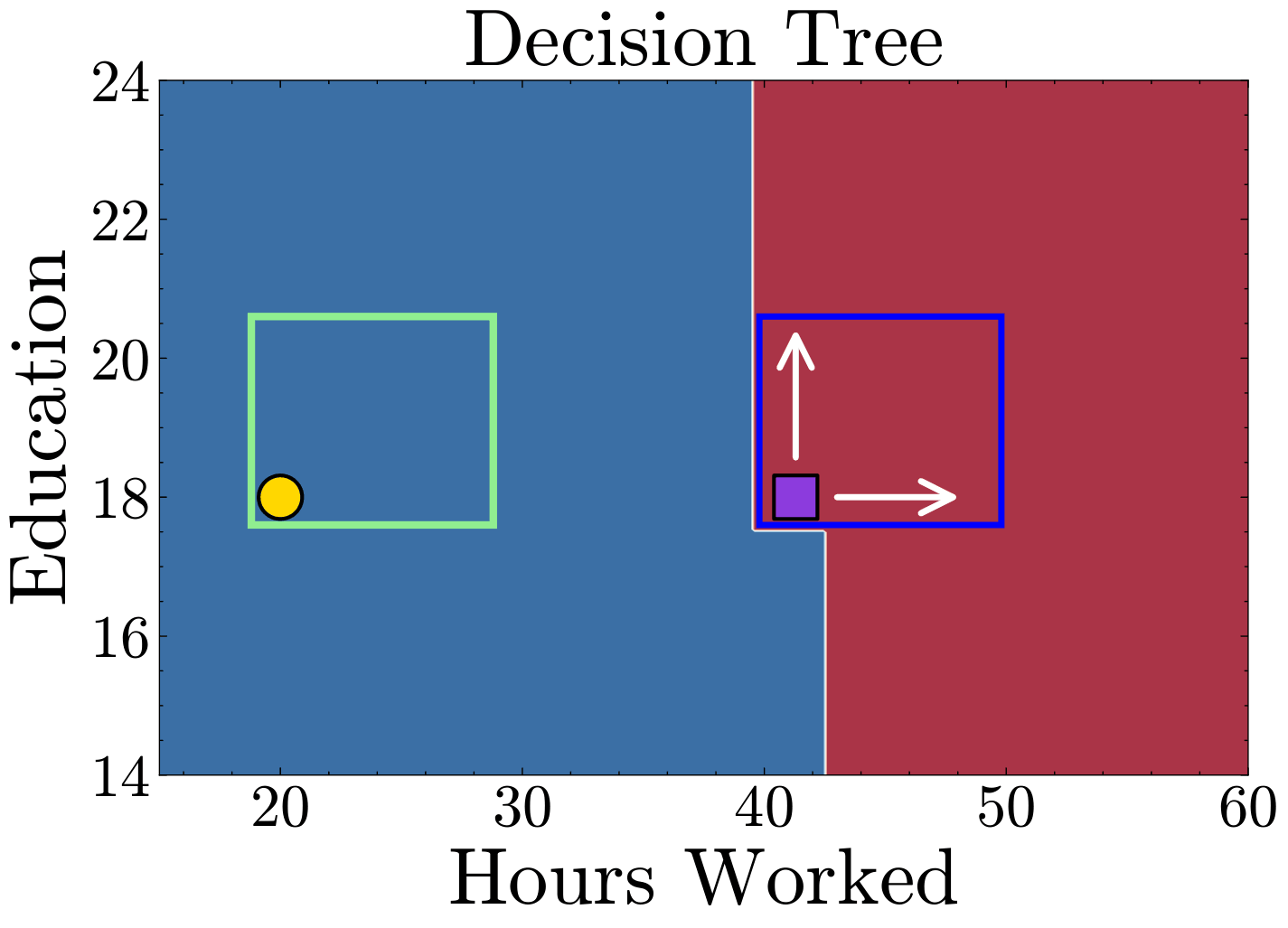}
         \caption{What the recipient believes the decision boundary looks like.}
         \label{fig:decision_tree}
     \end{subfigure}
     \hfill
     \begin{subfigure}[b]{0.32\textwidth}
         \centering
         \includegraphics[width=\textwidth]{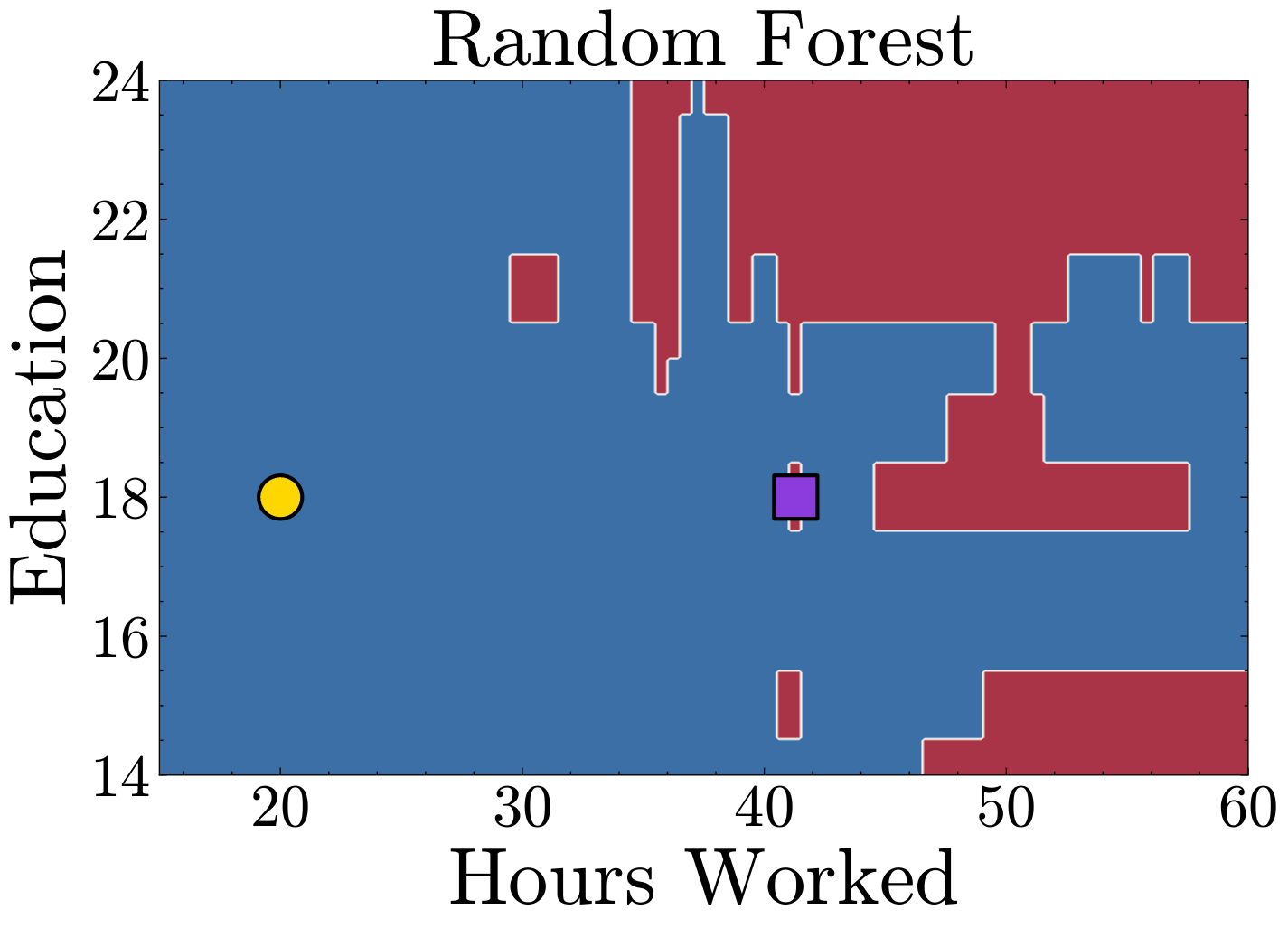}
         \caption{How the decision boundary may actually look like.}
         \label{fig:random_forest}
     \end{subfigure}\\
     \includegraphics[width=0.9\textwidth]{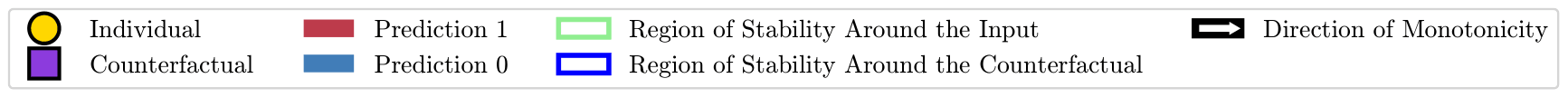}
        \caption{{\bf Intuitive Interpretations of Counterfactual Explanations May Be Invalid for Complex Models.} The left panel illustrates a {\color{yellow!80!black} data point} and its \textcolor{violet}{counterfactual explanation}. It summarizes the factual knowledge that a user has after receiving their explanation. The middle plot illustrates a decision boundary as it might implicitly be assumed by the user who tries to interpret the received explanation. It satisfies the properties of monotonicity (white arrows), stability around the counterfactual (\textcolor{blue}{blue box}), and stability around the input (\textcolor{green}{green box}). The right panel shows the decision boundary of a complex model as it might be used by the bank. Here, the counterfactual does not satisfy stability or monotonicity. For this model, the user's intuitive interpretations are invalid. 
        }
        \label{fig:random_forest_vs_decision_tree}
\end{figure*}

\begin{figure*}[t]
\begin{tcolorbox}[title={Explanation Card for Counterfactual Explanations in Loan Approval, Targeting the Loan Applicant}, enhanced,
    colback=white, colframe=black!60,
    boxrule=0.5pt, sharp corners,
    width=\textwidth]
\hlA{An automated system has denied your loan application because you do not have savings. To receive a loan upon reapplication, you are suggested to open a savings account and increase your savings to 100 DM.}\\[0.5em]
\hlB{The action suggested by the counterfactual explanation remains valid as long as you do not open a checking account, and your savings are}\textit{\hlB{ above }}\hlB{100 DM. Your housing and possible other credits do not influence the prediction.}\\[0.5em]
\hlD{However, changes in other features, such as your job, the number of people financially dependent on you, etc. 
may alter the loan decision. In these cases, the reapplication may no longer be successful. }\\[0.5em]
\hlC{Similar applicants who are younger than 26.5 years, do not have a checking account, and have savings of less than 100 DM are denied the loan for the same reason and are advised of the same changes.}

\end{tcolorbox}
\caption{\legendbox{color1} Counterfactual. \quad
\legendbox{color2} Stability of the Counterfactual. \quad
\legendbox{color3} Stability around the input. \quad
\legendbox{color4} Further caveat. \quad\\[8pt]
{\bf Explanation Card for a Counterfactual Explanation in Loan Approval.} The explanation card provides the machine decision, the counterfactual, as well as the necessary information about the stability that is required for interpretation. It also provides information about the {\it monotonicity} of the counterfactual. In this example, the classifier is a decision tree trained on the South German Credit dataset \citep{south_german_credit_522}.}
\label{fig:explanation_card_counterfactual_german_credit}
\end{figure*}

In this section, we introduce Explanation Cards for counterfactual explanations. We begin by discussing the intuitive interpretations that humans may assign to counterfactual explanations (Section \ref{sec:intuitive_cf}). We then show that these interpretations are invalid for decision functions that have a complex decision boundary (Section \ref{sec:invalid_cf}). Explanation Cards can overcome this problem: They provide the user with an interpretation that is valid for the given decision function (Section \ref{sec:explanation_cards_cf}).

\subsection{Interpreting Counterfactual Explanations}
\label{sec:intuitive_cf}

We begin by discussing the real-world interpretations that humans may assign to counterfactual explanations. For illustration, we consider a simplified setup where a machine learning model determines loan approval based on three attributes: age, income, and current account balance \citep{WachterEtal17,mothilal2020dice,karimi2021algorithmic,verma2024counterfactual}. Concretely, assume that a 32-year-old applicant with a yearly income of \$50,000 and a current account balance of \$4,000 is denied a loan. To offer recourse, the bank provides a counterfactual explanation,a nearby datapoint from a different class. 

\begin{quote}
{\it An automated system has denied your loan application because your current account balance is too low. To receive the loan upon reapplication, you are suggested to increase your account balance to \$14,000.}
\end{quote}

The applicant now attempts to implement the counterfactual explanation. To do so, she will {\it interpret} the counterfactual explanation, meaning that she attempts to infer real-world behaviors that would lead to loan approval \citep{WachterEtal17, ustun2019actionable,karimi2021algorithmic,BorRaiLux25}. For example, the applicant may conclude that increasing her yearly income to \$55,000 will allow her to save \$5,000 per year, so that at age 34, she will have saved the required \$10,000. In doing so, the applicant makes implicit assumptions about the explanation and the machine learning algorithm's behavior. Concretely, the applicant assumes that:

\begin{enumerate}
    \item \textbf{Monotonicity:} Although the explanation requires an increase of \$10,000 in balance, a growth of more than \$10,000 would also result in a positive outcome. In general, the counterfactual would still be valid if the yearly income and account balance are increased further.
    
    \item \textbf{Stability of the Counterfactual:} The provided explanation will still be valid if, over the course of implementing the explanation in the real world, the values of some of the other features change. In our example, the applicant assumes that the counterfactual explanation remains valid even after she has become two years older.
\end{enumerate}

In addition, we argue that the applicant might also assume that
\begin{enumerate}\setcounter{enumi}{2}
    \item \textbf{Stability Around the Input:} Other applicants with a similar background are suggested to implement similar changes. For example, people in the 30-35 age group, with a salary of \$45,000-55,000, and a balance lower than \$5,000, would be granted the loan after increasing their savings by approximately \$10,000.
\end{enumerate}

The fact that the applicant draws conclusions about the real world from the explanations is not specific to counterfactuals: Work in human-centered explainability has shown that such interpretations are what human decision makers are ultimately interested in \cite{liao2021human,raees2024explainable}.

\subsection{Intuitive Interpretations of Counterfactual Explanations May Be Invalid for Complex Models}
\label{sec:invalid_cf}

Whether the intuitive interpretations from the previous Section~\ref{sec:intuitive_cf} are valid depends strongly on the geometry of the decision boundary -- that is, on the specific machine learning model that the bank employs \citep{laugel2019issues,artelt2021evaluating,zhang2023density}. This is illustrated in Figure \ref{fig:random_forest_vs_decision_tree}. In the left panel of the Figure \ref{fig:random_forest_vs_decision_tree}, we illustrate an individual and a corresponding counterfactual explanation. In the middle panel of Figure \ref{fig:random_forest_vs_decision_tree}, we illustrate the decision boundary as it might be implicitly imagined by a user who interprets the explanation as discussed in the previous Section \ref{sec:intuitive_cf}. For this decision boundary, the user's intuitive interpretations would be valid. In the right panel of Figure \ref{fig:random_forest_vs_decision_tree}, we depict how the decision boundary of the machine learning model may actually look like. Concretely, we depict the decision boundary of a random forest classifier, a popular machine learning algorithm for tabular datasets. For this decision boundary, the user's intuitive interpretations are invalid: The counterfactual is not monotone, because moving further into the hours worked direction invalidates the counterfactual. The counterfactual is also not stable around the input, because slightly increasing the hours worked moves the counterfactual in the direction of the negative class. Similarly, the counterfactual itself is not stable. 

It is important to stress that the user who receives the explanation cannot know whether the bank's function behaves as in the middle plot (which would be good) or as in the right plot (in which case the explanation, albeit formally being a correct counterfactual, might be misleading to the user). Of course, one could argue that a bank might be unlikely to use a funny decision function as in the right plot. But the key point is that the user cannot take this for granted, which directly leads to our explanation cards: Whether the bank uses a nice model, as in the middle of the plot, or a funny model, as in the right plot, will become apparent from the explanation card that we introduce below.

Whether a counterfactual explanation is stable also has strong mathematical implications. We can formalize the above conditions as mathematical assumptions on the explanations. In each of them, $t$ is the decision threshold, meaning all points $x\in\Xcal$ yielding $f(x)<t$ are assigned the same label as $x_0$, and all points $x\in\Xcal$ yielding $f(x)\ge t$ are assigned the other label.

\begin{enumerate}
    \item \textbf{Monotonicity:} Let $x_C$ be a counterfactual explanation, and let $\mathbb{X}\subseteq\mathcal{X}$ be such that $x_C\in\mathbb{X}$ and $\forall x\in\mathbb{X}:f(x)\geq t$. The set $\mathbb{X}$ is called monotonically increasing in feature $j$ if for all $x\in\mathbb{X}$ it holds that $f(x+\lambda e_j)\geq t$ for any $\lambda\geq0$, where $e_j$ is the $j$-th standard basis vector.
    \item \textbf{Stability of the Counterfactual:} A counterfactual explanation $x_C$ is stable if there exist vectors $l,u\in\bar{\mathbb{R}}^d$ such that $l<x_C<u$ and $f(x)\geq t$ for all $l<x<u$, where the inequalities are coordinate-wise and $\bar{\mathbb{R}}:=\mathbb{R}\cup\{-\infty,\infty\}$.
    \item \textbf{Stability Around the Input:} A counterfactual explanation $x_C$ for the data point $x_0$ is stable around the input if there exist vectors $l,u\in\bar{\mathbb{R}}^d$ such that $l<x_0<u$ and $f(x)<t\leq f(x+(x_C-x_0))$ for all $l<x<u$. Again, the inequalities are coordinate-wise and $\bar{\mathbb{R}}:=\mathbb{R}\cup\{-\infty,\infty\}$.
\end{enumerate}

Now, we can show that counterfactuals that do not provide a region of stability do not provide reliable information about the behavior of the decision function:

\begin{theorem}[Non-Intuitive Counterfactual Explanations Are Not Informative, Informal]\label{thm:non-intuitive_implies_non-informative}
    Let $x_0\in\mathcal{X}$ be such that $f(x_0)<t$, and let $x_C\in\mathcal{X}$ be a counterfactual explanation for $x_0$. If $x_C$ is neither a stable counterfactual nor stable around the input, then it is non-informative. The monotonicity of $x_C$ alone, that is, $\mathbb{X}=\{x_C\}$ is not sufficient for informativeness.
\end{theorem}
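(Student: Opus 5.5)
The plan is to work directly with the notion of informativeness of \citet{gunther2025informative}. Roughly, an explanation is informative about $f$ if knowing it, together with the assumptions it comes with, restricts the prediction of $f$ on a set of positive $\mathrm{P}$-measure beyond what was known before. It is non-informative if, for every such set, functions consistent with the explanation still produce both labels there.

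The central observation is this. If $x_C$ is neither a stable counterfactual nor stable around the input, then the only guaranteed pointwise information is $f(x_0)<t\le f(x_C)$. Monotonicity with $\mathbb{X}=\{x_C\}$ adds only a union of rays $\{x_C+\lambda e_j:\lambda\ge0\}$. All of these sets are Lebesgue null, and hence $\mathrm{P}$-null if $\mathrm{P}$ has a density, which I would assume as in the formal setup of \citet{gunther2025informative}. So the explanation constrains $f$ only on a null set.

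To make this rigorous I would construct a family of decision functions $\mathcal{F}$ that is rich enough (for instance, all measurable, or all continuous, scoring functions). The steps are as follows.

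First, fix any measurable set $A\subseteq\Xcal$ with $\mathrm{P}(A)>0$ and any desired labelling of $A$.

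Second, construct $g\in\mathcal{F}$ that realizes that labelling on $A$, up to the null set $N=\{x_0,x_C\}\cup\bigcup_j\{x_C+\lambda e_j\}$, while $g(x_0)<t\le g(x_C)$ and monotonicity holds at $x_C$. In the continuous case, $g$ can be obtained by adding thin bump functions supported in arbitrarily small neighbourhoods of $N$ to a function that realizes the target labelling. Since these neighbourhoods can be chosen with arbitrarily small measure, the labelling on $A$ is unaffected except on a set of arbitrarily small measure.

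Third, check that $x_C$ is still returned as the counterfactual for $g$. It is the closest point of the opposite class, and that can be enforced by making $g<t$ on a ball around $x_0$ of radius $\|x_C-x_0\|$, apart from $x_C$ itself. If $A$ meets this ball, the labels of $g$ on $A$ cannot be chosen freely there, since they are forced to be negative. I would handle this by defining informativeness relative to information beyond the counterfactual's defining minimality property, or by noting that this ball is already known knowledge encoded in any counterfactual.

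Fourth, conclude that the explanation does not determine $f$ on any set of positive measure. It is therefore non-informative, and monotonicity at the single point $x_C$ does not change this.

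Finally, for contrast, and to explain why the hypothesis is needed, I would note the converse direction. If a box $l<x<u$ around $x_C$, or around $x_0$, exists, then $f$ is pinned down on an open set, which has positive measure. This shows the two stability notions are exactly what prevents the construction above.

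The main obstacle is matching the construction to the precise formal definition of informativeness in \citet{gunther2025informative}, which is stated relative to a function class. The bumps must stay inside that class, for example Lipschitz functions with bounded constant if the class is restricted. If the class has bounded complexity, the statement becomes false, so I would state the formal version for the unrestricted, or arbitrarily complex, class. This matches the informal phrase ``for complex models''.
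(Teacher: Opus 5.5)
Your core mechanism matches the paper's. Without a stability box, the explanation constrains the function only on the null set $\{x_C\}$ (plus the rays $\{x_C+\lambda e_j:\lambda\ge0\}$ in the monotone case), and continuous $[0,1]$-valued functions can be modified near such a set at will. What is missing is the link to the actual definition in \citet{gunther2025informative}. Non-informativeness means $R_n(\Fcal_{\mathrm{explain}}^{x_0})=R_n(\Fcal_{\mathrm{predict}}^{x_0})$. So you need that for almost every fixed sample $x_1,\dots,x_n$ and every $g\in\Fcal_{\mathrm{predict}}^{x_0}$ there is $\tilde g\in\Fcal_{\mathrm{explain}}^{x_0}$ with $\tilde g(x_i)=g(x_i)$ exactly for $i=0,\dots,n$. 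Then the suprema of $\sum_i\sigma_i g(x_i)$ over the two classes agree (the reverse inequality is just $\Fcal_{\mathrm{explain}}^{x_0}\subseteq\Fcal_{\mathrm{predict}}^{x_0}$), and taking expectations gives equality. Realizing a labelling of a fixed positive-measure set $A$ up to an exceptional set of small measure does not by itself give this. The sample may fall into the exceptional set, and it is real values, not labels, that enter $R_n$. The repair is easy and is what the paper does. Almost surely the sample misses the closed null set, so it lies at positive distance from it. You can then interpolate finitely many values (for $\{x_C\}$), or apply Tietze's extension theorem to the closed set $\{x_0,\dots,x_n\}\cup\{x_C+\lambda e_j:\lambda\ge0\}$. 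This is your bump construction with supports chosen away from the sample.

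Your Step 3 should be dropped, not patched. In the paper's formalization, the explanation class imposes only $\tilde g(x_C)\ge t$ (and $\tilde g\ge t$ on the ray in the monotone case) besides the prediction at $x_0$; minimality of $x_C$ is not part of it. This matters. If minimality were encoded, every $\tilde g\in\Fcal_{\mathrm{explain}}^{x_0}$ would be $<t$ on the open ball of radius $\|x_C-x_0\|$ around $x_0$, which has positive probability. Conditioning on all $x_i$ landing there with $\sigma_i=+1$ would then make the explanation informative whenever $t<1$, by Lemma 21 of \citet{gunther2025informative}, as in the proof of Theorem~\ref{thm:intuitive_implies_informative}. The theorem would then fail. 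Hence your second suggested fix (``the ball is already known knowledge'') is not valid unless you also put that constraint into $\Fcal_{\mathrm{predict}}^{x_0}$, which changes the baseline. The correct choice is your first one: treat the counterfactual purely as the constraint $g(x_C)\ge t$, exactly as the paper does.
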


For a discussion of the mathematical concept of ``informative'' explanations, we refer to \citet{gunther2025informative}; a formal statement of the theorem and the proof can be found in Appendix~\ref{proof:non-intuitive_implies_non-informative}.

\subsection{Explanation Cards Provide the User with Valid Interpretations}
\label{sec:explanation_cards_cf}

In this section, we propose an explanation card for counterfactual explanations. The explanation card provides information about the counterfactual's stability, enabling the user to interpret it. In a nutshell, {\bf the provider of the explanation, who has access to the decision boundary, enriches the explanation with additional information and clearly communicates to the user which interpretations are valid.}

Figure \ref{fig:explanation_card_counterfactual_german_credit} depicts our suggestion for an explanation card for a counterfactual explanation. In the first paragraph, the explanation card provides the machine decision and the counterfactual explanation. In the second paragraph, the explanation card provides information about the stability of the counterfactual: In the provided example, it communicates to the user that the counterfactual remains valid as long as they sufficiently increase their savings and do not open a checking account. The third paragraph provides the applicant with additional information about how changes in other features, for example stemming from downstream causal effects of the implemented action, may affect the credit decision. In the last paragraph, the explanation card provides additional information about the stability around the input: Individuals in the same age group who have a similar financial background obtain the same prediction and suggested change as the applicant. Additional examples of explanation cards for counterfactuals are provided in Appendix \ref{app:counterfactual-expl-cards}.

At an intuitive level, the benefits of the additional information in the explanation card are obvious: The applicant now knows what
conclusions can and cannot be drawn from the explanations. As we show
in the following theorem, these benefits can also be described mathematically:

\begin{theorem}[Stable Counterfactual Explanations Are Informative, Informal]\label{thm:intuitive_implies_informative}
    Let $x_0\in\mathcal{X}$ be such that $f(x_0)<t$, and let $x_C\in\mathcal{X}$ be a counterfactual explanation for $x_0$. If $x_C$ is a stable counterfactual or stable around the input, then it is informative.
\end{theorem}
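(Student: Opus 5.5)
Since the paper defers the precise definition of informativeness to \citet{gunther2025informative}, I would first fix the version I use. An explanation algorithm $E$ is informative at $x_0$ if its output rules out some nontrivial set of decision functions. Equivalently, the set of functions $g$ that are consistent with what the explanation asserts, and that could have produced the same explanation card, is a strict, nontrivial subset of the hypothesis class. In particular, the explanation must certify some property of $f$ on a region of positive $\mathrm{P}$-mass, not only at single points. The non-informativeness argument behind Theorem~\ref{thm:non-intuitive_implies_non-informative} runs the other way. There, point information about $f(x_C)$ can be destroyed by modifying $f$ on an arbitrarily small neighbourhood while keeping the counterfactual valid at the point. The present proof should therefore exhibit a region of positive mass on which the explanation constrains $f$.

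\textbf{Case 1: $x_C$ is a stable counterfactual.} By definition there exist $l<x_C<u$ in $\bar{\mathbb{R}}^d$ with $f(x)\ge t$ on the open box $B=\{x: l<x<u\}$. Because the inequalities are strict and $x_C$ lies in the interior, $B$ contains an open neighbourhood of $x_C$. Assuming $\mathrm{P}$ has positive density around $x_C$ (the standing assumption I would add, as in \citet{gunther2025informative}), $B$ has positive mass. The explanation together with its box then separates function classes. Any $g$ with $g(x)<t$ somewhere in $B$ cannot produce this explanation card. This yields a nontrivial informativeness bound: a uniform guarantee on the predicted label over $B$, rather than a pointwise one.

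\textbf{Case 2: $x_C$ is stable around the input.} Here there is a box $B_0\ni x_0$ with $f(x)<t\le f(x+\delta)$ for all $x\in B_0$, where $\delta=x_C-x_0$. This constrains $f$ on two positive-mass sets: $f<t$ on $B_0$ and $f\ge t$ on the translate $B_0+\delta$, which contains a neighbourhood of $x_C$. The argument from Case 1 then applies to the translate, and additionally to $B_0$. One could also phrase this as stability of the explanation map itself: every $x\in B_0$ receives the valid counterfactual $x+\delta$. This is precisely the ``benign local region'' criterion described in Section~\ref{sec:problem-setup}.

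The main obstacle is matching the paper's formal notion of informativeness, which is only cited here. If it is defined via uniform approximation of $f$ within a function class, I would take the approximating set to be the class of functions sharing the same sign pattern on $B$ (respectively $B_0$ and $B_0+\delta$). I would then show that this class has nontrivial diameter reduction compared with the unconstrained class. The measure-positivity assumption on boxes is the step I would be most careful to state explicitly.
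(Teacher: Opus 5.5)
\textbf{Verdict: there is a genuine gap.} Your geometric input is the same as the paper's. Under a positive density the open box $B=\{x: l<x<u\}$ has positive mass, and every function consistent with the explanation (box included) satisfies $g\ge t$ on $B$. For stability around the input, the translate $B_0+(x_C-x_0)$ plays the same role.

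What is missing is the step from ``the explanation constrains $f$ on a positive-mass set'' to ``the explanation is informative''. The definition you fix cannot carry that step. A bare counterfactual already produces a strict, nontrivial consistent subset, since it excludes every $g$ with $g(x_C)<t$. Yet Theorem~\ref{thm:non-intuitive_implies_non-informative} says that explanation is \emph{not} informative. Writing ``positive mass'' into the definition repairs this only by fiat, so Case~1 becomes true by definition rather than by argument. The diameter-reduction fallback is also not the notion the paper uses.

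In \citet{gunther2025informative}, informativeness means $R_n(\Fcal_{\mathrm{explain}}^{x_0})<R_n(\Fcal_{\mathrm{predict}}^{x_0})$. That is, the functions consistent with the explanation must have strictly smaller Rademacher complexity than those consistent only with the prediction at $x_0$. The missing argument is short:
\begin{itemize}
\item Since $\Fcal_{\mathrm{explain}}^{x_0}\subseteq\Fcal_{\mathrm{predict}}^{x_0}$, the empirical suprema compare weakly for every sample. By Lemma~21 of that paper, it then suffices to find a positive-probability event on which the comparison is strict.
\item Take $A$ to be the event that $x_1,\dots,x_n\in B$ and $\sigma_1=\dots=\sigma_n=-1$. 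It has probability $(\mathrm{P}(B)/2)^n>0$.
\item On $A$, every $g\in\Fcal_{\mathrm{explain}}^{x_0}$ gives $\frac{1}{n}\sum_i\sigma_i g(x_i)\le -t$.
\item Since $x_0\notin B$ (because $f(x_0)<t\le f$ on $B$), a continuous $[0,1]$-valued function matching the prediction at $x_0$ can vanish at every $x_i$. So the supremum over $\Fcal_{\mathrm{predict}}^{x_0}$ is $0$.
\end{itemize}

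Strictness needs $t>0$, which follows from $0\le f(x_0)<t$; your sketch never checks this. The sign pattern also matters: with $\sigma_i=+1$, the one-sided constraint $g\ge t$ yields no separation. Case~2 then follows verbatim with $B$ replaced by $B_0+(x_C-x_0)$.
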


The formal theorem and its proof can be found in the Appendix \ref{app:coounterfactuals:intuitive_implies_informative}. %
An important property of the explanation card is that it provides the size of the regions within which the counterfactual remains stable. Mathematically, it is possible that the counterfactuals are stable only to tiny changes in the input variables; this would probably be considered useless by most people. Explanation cards reveal this and make the actual amount of stability transparent to the user.

\section{Explanation Cards for SHAP}
\label{sec:SHAP}

As follows, we propose Explanation Cards for SHAP, one of the most popular explanation algorithms \citep{Lundberg17}. 
SHAP decomposes the prediction at point $x_0$
into a sum of individual contributions of each feature, plus the
expected prediction as a baseline. Mathematically, this means
\[f(x_0) = \Phi_1(x_0) + ... + \Phi_d(x_0)+\E(f(X)),\]
where $\Phi_j(x_0)$ is the SHAP value of feature $j$ at $x_0$. For a detailed mathematical definition, see Appendix \ref{app:SHAP-def}. We consider the common interventional version of SHAP, which is also implemented in the Python package \texttt{shap} \citep{Lundberg17, janzing2020feature}.

In Section \ref{subsec:intuitive-interpretations}, we consider a hypothetical example where a doctor uses SHAP to understand a model and derive intuitive interpretations they might draw from the SHAP values. As we will see in Section
\ref{subsec:underlying-assumptions}, these interpretations rely on
assumptions about the underlying model, which may not hold in complex
decision models.  When presented with the corresponding explanation
card, introduced in Section \ref{subsec:shap-explanation-card}, the doctor is able to assess whether their intuitive interpretations are
justified.

\subsection{Intuitive Interpretations of SHAP Values}
\label{subsec:intuitive-interpretations}
 
Consider the following scenario: A doctor is using an AI model to predict whether a patient has chronic heart disease.  
When in doubt about a prediction, the doctor inspects SHAP explanations to see whether the model's behavior conflicts with their prior knowledge.
\footnote{Note that in this setting, the doctor does \emph{not} aim to learn about real-world relationships between the features and the target. Rather, the doctor wants to decide whether to trust the model's decision. Assuming that interventional SHAP provides insight into real-world statistical or causal relationships is another common misinterpretation; see \citet{chen2020true, janzing2020feature,freiesleben2024scientific} for a discussion.}
For example, suppose the patient in question is $43$ years old, and the corresponding SHAP value for age is $-0.22$. From this single SHAP value, there may arise the first misinterpretation:
\begin{enumerate}
    \item \label{shap-interpretation-0}\textbf{Stability:} Similar patients of a similar age have a similar SHAP value.
\end{enumerate}
This interpretation seems natural. However, it does not necessarily hold. A common way to get a more representative picture, is by plotting several SHAP values in a SHAP dependence plot. It shows SHAP values across many instances by age, allowing for a better understanding of their stability.
\begin{figure}[t]
    \centering
    \includegraphics[width=0.9\linewidth]{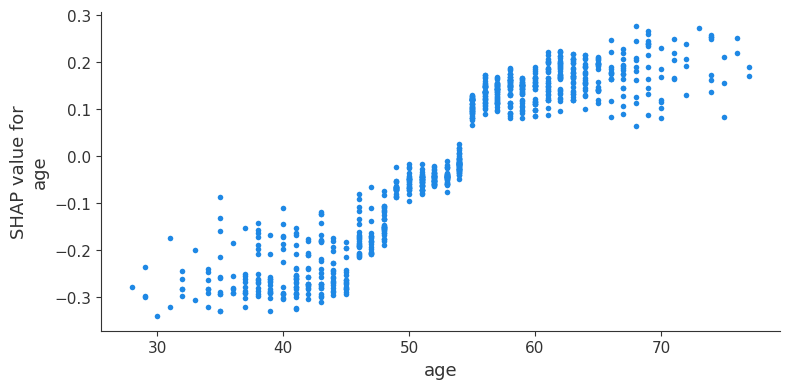}
    \caption{A classical SHAP dependence plot for age, generated by the shap package.}
    \label{fig:shap-dp}
\end{figure}
Figure \ref{fig:shap-dp} shows such a dependence plot across the dataset, confirming that most patients around age 43 have similar negative SHAP values, ranging from roughly $-0.3$ to $-0.1$.

However, even this SHAP dependence plot might still lead to misinterpretation. In particular, the doctor may conclude:

\begin{enumerate}
    \setcounter{enumi}{1}
    \item \label{shap-interpretation-1}\textbf{Extrapolation:} The SHAP value of the 43-year-old patient is negative and the one of a 53-year-old patient is zero. Hence, the patient would have a higher predicted risk if they were older.
    \item \label{shap-interpretation-2}\textbf{Monotonicity:} The SHAP values in the plot increase with age. Hence, the predicted risk increases with age.
\end{enumerate}
In general, neither of these interpretations holds. In the following section, we explain why and subsequently address the problem using our explanation card in Section \ref{subsec:shap-explanation-card}.

\begin{figure*}
    \centering
    \begin{subfigure}[t]{0.22\linewidth}
        \includegraphics[width=\linewidth]{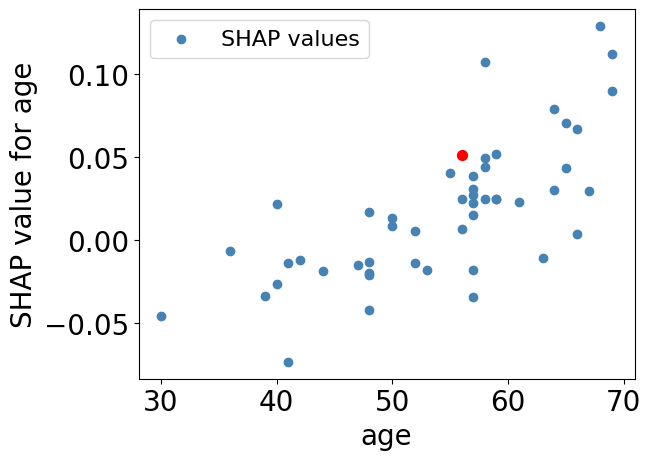}
        \caption{Unstable SHAP values.}
        \label{fig:SHAP-DP chaotic}
    \end{subfigure}
    \hfill
    \begin{subfigure}[t]{0.22\linewidth}
        \includegraphics[width=\linewidth]{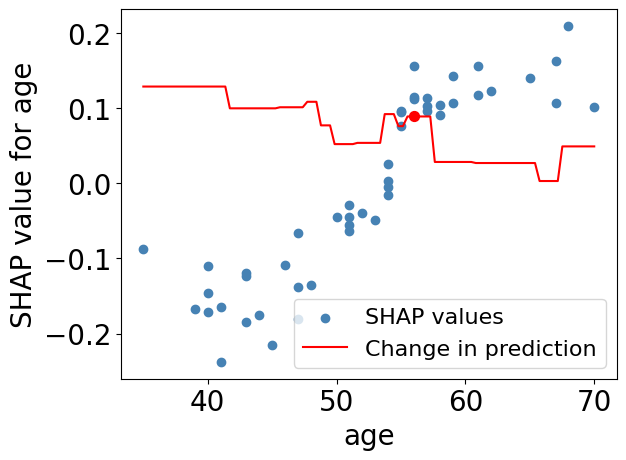}
        \caption{Stable but misleading SHAP values.}
        \label{fig:SHAP-DP bad ICE curve}
    \end{subfigure}
    \hfill
    \begin{subfigure}[t]{0.22\linewidth}
        \includegraphics[width=\linewidth]{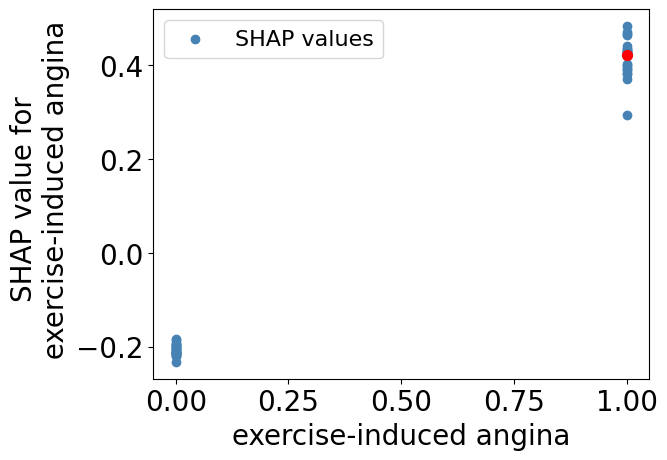}
        \caption{Ambiguous SHAP values.}
        \label{fig:SHAP-DP-angina}
    \end{subfigure}
    \hfill
    \begin{subfigure}[t]{0.22\linewidth}
        \includegraphics[width=\linewidth]{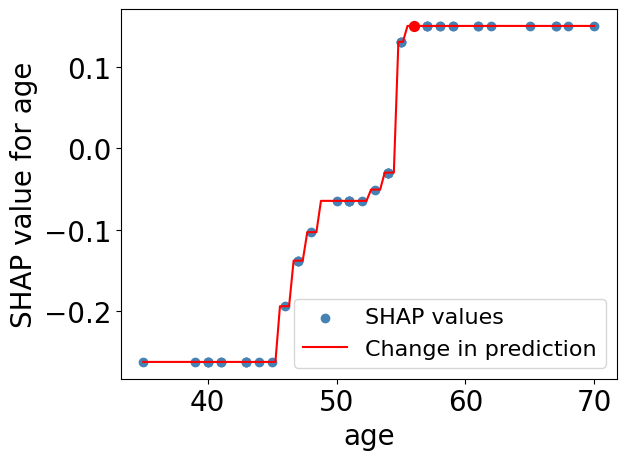}
        \caption{Additive model.}
        \label{fig:SHAP-DP GAM}
    \end{subfigure}
    \caption{
    SHAP dependence plots for one datapoint (red) and its 50 nearest neighbors in the input space (blue).
    Figures~\ref{fig:SHAP-DP bad ICE curve} and~\ref{fig:SHAP-DP GAM}
    additionally show the prediction change when varying age while keeping all
    other features fixed.
    Figure~\ref{fig:SHAP-DP chaotic} illustrates unstable SHAP values.
    Figure~\ref{fig:SHAP-DP bad ICE curve} shows a seemingly stable dependence
    plot that nevertheless misrepresents the prediction change.
    This discrepancy is caused by other features, like exercise-induced angina, whose SHAP values may also depend on age, as shown in Figure~\ref{fig:SHAP-DP-angina}.
    For additive models, shown in
    Figure~\ref{fig:SHAP-DP GAM}, SHAP values are functions of the corresponding
    feature and exactly recover the prediction change.
    }
    \label{fig:shap-illustration}
\end{figure*}

\subsection{The Intuitive Interpretations May Be Invalid for Complex Models}
\label{subsec:underlying-assumptions}

\begin{figure*}[t]
\tcbset{
  cardstyle/.style={
    enhanced,
    colback=white,
    colframe=black!60,
    boxrule=0.5pt,
    sharp corners,
  }
}
\begin{tcolorbox}[
    cardstyle,
    title={Explanation Card for SHAP Explanations in Medical Diagnosis, Targeting the Doctor},
    width=\textwidth
]

    \hlA{The patient is classified as sick with a predicted score of $0.565$. In general, patients with positive scores are classified as sick; the higher the score, the more certain the classification algorithm.} \\[0.5em]
    \hlB{The plots below show the SHAP values of patients in a local region around the explained instance: \\% and the range of the relevant feature is colored in the plot.
    }
    \begin{tabularx}{\linewidth}{rXrXrX}
        \hlB{Age:} &\hlB{29-62} & \hlB{Sex:} & \hlB{Male} & \hlB{Resting BP:} & \hlB{112-160 mmHG}\\
        \hlB{Cholesterol:} & \hlB{157-325 mg/dl} & \hlB{Fasting blood sugar:} & \hlB{$< 120$ mg/dl} & \hlB{Max heart rate:}  & \hlB{120-202}\\
        \hlB{ST depression:} & \hlB{0-3} & \hlB{ST slope:} & \hlB{upsloping or flat} 
    \end{tabularx}
\tcblower
\begin{tcolorbox}[
    cardstyle,
    title={Feature: Age},
    sidebyside,
    sidebyside gap=3mm,
    righthand width=0.55\textwidth
]
\includegraphics[width=\linewidth]{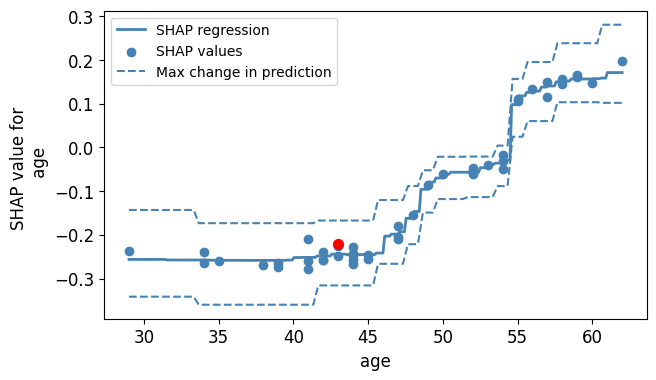}
\tcblower
\hlA{SHAP attributes a value of $-0.22$ to the patient's age ($43$), see red dot in the plot.}\\[0.5em]
\hlC{The SHAP values or other patients in the local region (blue dots) and their regression (blue line) roughly reflect how age affects the prediction. The dotted lines show the maximal/minimal change in prediction if age is varied.} %
\end{tcolorbox}
\end{tcolorbox}
\caption{\legendbox{color1} Machine decision and SHAP value. \;
\legendbox{color2} Local region. \;
\legendbox{color3} Feature effect on prediction. \\ \\
{\bf Explanation Card for a SHAP Explanation for a doctor debugging a prediction:} 
In the explanation card, the SHAP explanations are augmented with 
human-intelligible interpretations, thereby helping prevent misinterpretation. They are split into general information, the prediction and a local region, and additional feature-wise plots.
Depending on the model and observation, the card may also reveal that SHAP dependence plot does not reflect the prediction behavior as the dotted envelope is too large.}
\label{fig:explanation_card_shap_good}
\end{figure*}

The presented interpretations do not hold for arbitrarily complex models: Interpretation \ref{shap-interpretation-0} needs the model to be stable, and Interpretation \ref{shap-interpretation-1} and \ref{shap-interpretation-2} need the model to be additive. In the following, we formalize the interpretations, discuss them further, and connect them to mathematical theory.

\paragraph{SHAP Values Are Stable Only if the Model is Stable}
We begin by formalizing the concept of stability for SHAP values. 

\begin{definition}[Local Stability]
    For an instance $x_0\in\R^d$, we call its SHAP values \textbf{locally stable} with respect to a local region $U$ around $x_0$ and tolerance thresholds $\eps_1,...,\eps_d$ if $|\Phi_j(x)-\Phi_j(x_0)|\le \eps_j$ for $j=1,...,d$ and all $x\in U$.
\end{definition}

It is clear that the SHAP values can only be stable if the model itself is stable: As it holds $f(x)=\Phi_1(x)+...+\Phi_d(x)+\E(f(X))$, the model can vary by at most $\eps_1+...+\eps_d$ within a region $U$ if our SHAP values are stable within $U$ with respect to tolerance thresholds $\eps_1,...,\eps_d$. As one is usually not given any stability guarantees for a model, the stability of the SHAP values is not guaranteed either. As a counterexample, consider Figure \ref{fig:SHAP-DP chaotic}. It shows the SHAP values for age of a patient (red dot) as well as their 50 nearest neighbors (blue dots, representing a local region $U$). As we see, there is no stability in the SHAP value, not even for those neighbors of the same or very similar age, who have SHAP values with positive as well as negative signs.
Hence, it is always important to consider several SHAP values, for example, in a SHAP dependence plot, if one wants to understand the prediction mechanism of the model. A single SHAP value is not representative of that. 

Local stability is not just an intuitive definition; it is also grounded in theory. As shown in \citet{gunther2025informative}, SHAP values are not informative unless we have further assumptions about our model class. However, when SHAP values come with stability guarantees, they do become informative with respect to the definition in \citet{gunther2025informative} as the following theorem states.  

\begin{theorem}[Locally Stable SHAP Values Are Informative, Informal]\label{Thm-shap-informative}
  Consider the explanation \linebreak
  $\Ecal(f, x_0)= ((\Phi_1(x_0),...,\Phi_d(x_0)), U,
  (\eps_1,...,\eps_d))$, which, given a point $x_0$, returns the SHAP
  values of $x_0$, a neighborhood $U$ around $x_0$ and tolerance
  thresholds $\eps_1,...,\eps_d$ such that the SHAP values are locally
  stable with respect to $U$ and $\eps_1,...,\eps_d$. Then,
  $\Ecal$ is an informative explanation if $U$ has positive
  probability and the tolerance thresholds are sufficiently small.
\end{theorem}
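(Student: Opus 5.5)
We need to recall the informativeness definition from Günther et al. 2025. Roughly: an explanation $\Ecal$ is informative with respect to a function class $\Fcal$ if observing the explanation output restricts the set of possible functions. Concretely, there exist $f,g\in\Fcal$ such that the set of functions consistent with $\Ecal(f,x_0)$ does not contain $g$, i.e.\ the explanation rules out some functions. A common quantitative form is that the explanation shrinks the "uncertainty region" of functions relative to some loss, e.g.\ $L^1(\mathrm P)$ or sup distance. We take the version where informativeness means: the set $\Fcal_E:=\{g\in\Fcal:\Ecal(g,x_0)=E\}$ of functions consistent with an observed explanation $E$ has diameter (in the relevant metric, e.g.\ $\|g-h\|_{L^1(\mathrm P)}$) strictly smaller than the diameter of $\Fcal$ itself.

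The plan is to exploit the additivity identity $f(x)=\sum_j\Phi_j(x)+\E(f(X))$ together with local stability. First I would show that any function $g$ consistent with the output $E=((\Phi_j(x_0))_j,U,(\eps_j)_j)$ satisfies, for all $x\in U$,
\[
\Bigl|g(x)-\E(g(X))-\sum_{j=1}^d\Phi_j(x_0)\Bigr|\le \sum_{j=1}^d\eps_j=:\eps .
\]
Second, I would pin down the baseline. Since $g(x_0)=\sum_j\Phi_j(x_0)+\E(g(X))$ holds only with $g$'s own expectation, the explanation fixes $g$ on $U$ only up to an additive constant. To remove this constant I would use the form $g(x)-g(x_0)\in[-\eps,\eps]$ on $U$. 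This means all consistent $g$ have oscillation at most $2\eps$ on $U$, independent of the constant.

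Third, I would exhibit a function that is excluded. Take any $h\in\Fcal$ whose oscillation on $U$ exceeds $2\eps$. This is where "tolerance thresholds sufficiently small" is used: we need $\eps$ below $\tfrac12\sup_{h\in\Fcal}\operatorname{osc}_U h$, which is positive for any nontrivial class, e.g.\ bounded functions $\Xcal\to[0,1]$. Such an $h$ cannot produce this explanation, so the consistent set is a proper subset.

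Fourth, for the quantitative version I would lower-bound the distance between $h$ and every consistent $g$. Since $\mathrm P(U)>0$, the set where $h$ deviates from the constant $g(x_0)$ by more than $\eps+\delta$ has positive mass, giving
\[
\inf_g\|h-g\|_{L^1(\mathrm P)}\ge c\,\mathrm P(U)>0 .
\]
Hence the diameter of the consistent set is strictly smaller than that of $\Fcal$.

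The main obstacle is matching the exact formal definition of informativeness in \citet{gunther2025informative}, which is stated in terms of worst-case uncertainty over the class. Positive probability of $U$ is essential: otherwise constraints on a null set are invisible in $L^p(\mathrm P)$. I would choose $h$ with a sharp step inside $U$ on a positive-mass subset and verify measurability.
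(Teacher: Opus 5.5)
Your first two steps are sound and contain the core of the paper's argument. Since $g(x)-g(x_0)=\sum_j\bigl(\Phi_j^g(x)-\Phi_j^g(x_0)\bigr)$, every $g$ producing the same explanation satisfies $|g(x)-g(x_0)|\le\eps:=\eps_1+\dots+\eps_d$ on $U$. The gap is in what you do with this bound.

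Informativeness in \citet{gunther2025informative}, as used in this paper, is the strict inequality $R_n(\Fcal_{\mathrm{explain}}^{x_0})<R_n(\Fcal_{\mathrm{predict}}^{x_0})$ of Rademacher complexities for every $n$. Here $\Fcal_{\mathrm{predict}}^{x_0}$ is the set of functions in $\Fcal$ agreeing with $f$ at $x_0$, and $\Fcal_{\mathrm{explain}}^{x_0}$ is the subset that additionally produces the same explanation.

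Your step three (some $h$ is excluded) is far too weak for this. A counterfactual also excludes every $g$ with $g(x_C)<t$, yet Theorem~\ref{thm:non-intuitive_implies_non-informative} shows it is non-informative.

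Your step four does not rescue the argument, even for your own surrogate definition. A single $h$ at positive $L^1(\mathrm P)$-distance from the consistent set says nothing about distances between pairs of consistent functions, so ``hence the diameter is strictly smaller'' does not follow. In fact the claim is false as you set things up, because passing to oscillations discards the level. For instance, if $f$ is constant, every constant function has the same, identically vanishing, SHAP values as $f$ and therefore produces the same explanation. So your consistent set contains the constants $0$ and $1$, at distance $1=\operatorname{diam}\Fcal$.

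The missing ingredient is to anchor the level with the prediction, which is known in both classes. The constraint $g(x_0)=f(x_0)$ turns your oscillation bound into $g(x)\in[f(x_0)-\eps,f(x_0)+\eps]$ for all $x\in U$ and all $g\in\Fcal_{\mathrm{explain}}^{x_0}$.

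The paper then conditions on the event $A$ that $x_1,\dots,x_n\in U$ and all $\sigma_i=+1$ (or all $\sigma_i=-1$). This event has positive probability exactly because $\mathrm P(U)>0$. On $A$, the supremum over $\Fcal_{\mathrm{explain}}^{x_0}$ of $\frac1n\sum_i\sigma_ig(x_i)$ is at most $f(x_0)+\eps$ (resp.\ $-(f(x_0)-\eps)$). By contrast, $\Fcal_{\mathrm{predict}}^{x_0}$ only pins the value at $x_0$ and attains $1$ (resp.\ $0$). This is where ``sufficiently small'' takes its precise form, $\eps<1-f(x_0)$ or $\eps<f(x_0)$, rather than your $\eps<\tfrac12\sup_h\operatorname{osc}_U h$.

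Lemma~21 of \citet{gunther2025informative} then upgrades a strict gap on a positive-probability event, together with the inclusion $\Fcal_{\mathrm{explain}}^{x_0}\subseteq\Fcal_{\mathrm{predict}}^{x_0}$, to strict inequality of the unconditional complexities. Your oscillation bound alone could be pushed through for $n\ge2$ and $\eps<\tfrac12$ by conditioning on an event with both signs present. The theorem is claimed for every $n$, however, and for $n=1$ the anchoring at $f(x_0)$ cannot be avoided.
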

A formal version of the theorem, including exact bounds on $\eps_j$ as well as the proof, can be found in Appendix \ref{app:proof-shap-informative}. But even if SHAP values are locally stable, a SHAP dependence plot can still be misleading if our model is not additive. %

\paragraph{Interpretation \ref{shap-interpretation-1} and \ref{shap-interpretation-2} Only Hold if the Model Is Additive.}
Interpretations \ref{shap-interpretation-1} and \ref{shap-interpretation-2} seem natural: If the plot shows an increase from age 43 to age 53, this might suggest that the risk of disease would be higher if our patient were older. Generally, the increase in the plot might raise the impression that the risk increases with age. However, Figure \ref{fig:SHAP-DP bad ICE curve} shows a counterexample illustrating that neither of the two has to hold. There, we see the SHAP value of a patient in red as well as their nearest neighbors. Also here, it seems that increasing the patient's age increases the prediction -- but the opposite is true: The red curve shows the change in prediction if we vary the age of the patient represented by the red dot with all other features fixed, revealing that the prediction actually decreases with age. The reason for this is that the output is decomposed into the SHAP values by $f(x)=\Phi_1(x)+...+\Phi_d(x)+\E(f(X))$ and the SHAP dependence plot only shows $\Phi_1(x)$. If the SHAP values of all other features stayed the same when age is varied, this would imply that the SHAP plot exactly reflects the prediction behavior of the function (up to a constant). However, each SHAP values generally depends on all features, rather than just on its own feature, as has been pointed out
repeatedly in the literature \citep{lundberg2018consistent, zhang2021interpreting, BorLux_shap_2023, fumagalli2023shap}. Hence, $\Phi_2,...,\Phi_d$ may also change when age is varied, making the prediction change in ways not reflected in the plot. For example, consider the feature of whether a patient has exercise-induced angina. One could imagine that the presence of such angina is only considered risky by the model if the patient is of a certain age. Figure \ref{fig:SHAP-DP-angina} shows the SHAP values of the same patients as in Figure \ref{fig:SHAP-DP bad ICE curve}, but this time for the feature ``exercise induced angina''. And indeed, we observe that the SHAP values of other patients with exercise-induced angina vary from 0.3 to 0.5. Thus, this SHAP value is influenced by other features, possibly also by age. Similarly, a change in age could also cause a strong change in \textbf{all} other SHAP values, explaining why the red curve in Figure \ref{fig:SHAP-DP bad ICE curve} shows such a different prediction behavior than what the SHAP plot suggests.

However, there is an assumption on the underlying model under which each SHAP value is invariant to change in other features: We call a function \textbf{additive} if it can be written as a
sum of functions that only depend on one feature. That is,
$f(x)=f_1(x^{(1)}) +...+ f_d(x^{(d)})$ for component functions $f_1,...,f_d:\R\to\R$. To get rid of ambiguous constants in the components, one often separates the expected value and mean-centers the components, leading to $f(x)=f_1(x^{(1)}) +...+ f_d(x^{(d)})+\E(f(X))$. \citet{BorLux_shap_2023} shows that in this case, every SHAP value is given by $\Phi_j(x)=f_j(x^{(j)})$. In particular, it only depends on the corresponding feature and is hence invariant to changes in other features. Reversely, if each SHAP value only depends on its own feature, the function $f$ is additive as it is given by $f(x) = \Phi_1(x^{(1)})+...+\Phi_d(x^{(d)}) + \E(f(X))$.
For an illustration, see Figure \ref{fig:SHAP-DP GAM}. It shows a SHAP dependence plot for an additive model, and again, a red curve illustrating the change in prediction as the age of the person represented by the red dot is varied. Unlike before, we now see that all SHAP values lie on a line (and are hence invariant to changes in other features) and precisely reflect the prediction behavior. This means that in the case of additive models, SHAP values do exactly what one would like them to do: They split the function value into a sum of feature contributions, which are completely determined by the feature itself. In particular, the SHAP dependence plot also describes the change in prediction with respect to that feature, rather than just the SHAP value. 

In practice, it is unrealistic to expect our function to be additive. %
Nevertheless, for the doctor, it might be sufficient to understand the predicting behavior only in a local region, meaning for a given patient and similar patients. Also, we can further relax the problem by considering \emph{approximate} statements. If in a local region, all SHAP values are mostly invariant to change in other features, Interpretation \ref{shap-interpretation-1} and \ref{shap-interpretation-2} will be approximately valid in that region. This leads to the following definition.

\begin{definition}[SHAP Value Invariant to Change in Other Features]\label{def:weakly-interacting}
  We say that the $j$-th SHAP value of a function $f$ is approximately invariant to change in other features in a local region $U\subseteq \Xcal$ and with respect to some error $\eps>0$ if the expected unexplained variance of the SHAP value given its feature is less than $\eps$: 
	$$\E_{X\sim U}\left(\Var_{X\sim U}\left(\Phi_j(X)\mid X^{(j)}\right)\right)\le\eps.$$
\end{definition}

The unexplained variance given feature $X^{(j)}$ measures how much variance there is left when we control feature $j$, and thus, how much the SHAP value varies due to a change in other features. With the aid of those quantities, one can now mathematically quantify to what degree a given SHAP value is invariant to change in the other features.\\
As it turns out, this implies approximate additivity of the model in that local region, analogously to the global and exact case. %

\begin{theorem}[Local Approximate Invariance of SHAP Values Implies Local Approximate Additivity]\label{Thm-GAM-expl-var}\
	If every SHAP value of a function $f$ is approximately invariant to change in other features with respect to some local region $U$ and some error $\eps>0$, $f$ can be approximated by an additive model $g(x)=g_1(x^{(1)})+...+g_d(x^{(d)})+\E(f(X))$ on $U$ with mean squared error at most 
	$$\E_{X\sim U}\left((f(X)-g(X))^2\right) \le d^2\eps.$$ 
    The components of $g$ are given by $g_j(x^{(j)})=\E(\Phi_j(X)\mid X^{(j)}=x^{(j)})$.
\end{theorem}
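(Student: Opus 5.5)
The plan is to write $f - g$ as a sum of per-feature residuals and bound its second moment via Cauchy--Schwarz. By the SHAP efficiency property stated at the beginning of Section~\ref{sec:SHAP}, $f(x)=\Phi_1(x)+\dots+\Phi_d(x)+\E(f(X))$ holds for every $x\in U$. I take the constant $\E(f(X))$ in $g$ to be the same global baseline appearing in the SHAP decomposition, so the constants cancel. With the choice $g_j(x^{(j)})=\E_{X\sim U}(\Phi_j(X)\mid X^{(j)}=x^{(j)})$, this gives, for every $x\in U$,
\[
f(x)-g(x)=\sum_{j=1}^d R_j(x),\qquad R_j(x):=\Phi_j(x)-\E_{X\sim U}\bigl(\Phi_j(X)\mid X^{(j)}=x^{(j)}\bigr).
\]
Here all conditional expectations are taken with respect to the distribution restricted to $U$, which is the same distribution used in Definition~\ref{def:weakly-interacting}.

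\textbf{Step 1: the residuals are controlled by the hypothesis.} By the tower property,
\[
\E_{X\sim U}\bigl(R_j(X)^2\bigr)=\E_{X\sim U}\Bigl(\E\bigl(R_j(X)^2\mid X^{(j)}\bigr)\Bigr)=\E_{X\sim U}\Bigl(\Var_{X\sim U}\bigl(\Phi_j(X)\mid X^{(j)}\bigr)\Bigr)\le\eps .
\]
The inequality is exactly Definition~\ref{def:weakly-interacting}, applied to feature $j$.

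\textbf{Step 2: combine the residuals.} Apply the elementary inequality $(\sum_{j=1}^d a_j)^2\le d\sum_{j=1}^d a_j^2$ pointwise and take expectations over $U$:
\[
\E_{X\sim U}\bigl((f(X)-g(X))^2\bigr)\le d\sum_{j=1}^d\E_{X\sim U}\bigl(R_j(X)^2\bigr)\le d\cdot d\,\eps=d^2\eps .
\]
This is the claimed bound.

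\textbf{Main obstacle.} The hard part is only bookkeeping. The conditional expectations defining $g_j$ must be taken under the same restricted law as in the definition. The additive constant must also match: if one wanted $\E_U(f)$ instead of the global mean, one would have to check that a mean shift only lowers the error. It also needs stating that SHAP values are square-integrable on $U$, so that the conditional variances exist. One could try to sharpen the constant to $d\eps$ using orthogonality of the $R_j$, but the $R_j$ are not orthogonal in general, so the Cauchy--Schwarz bound is the natural one to aim for.
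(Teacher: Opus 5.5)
Your proof is correct and is essentially the paper's argument. Both use the decomposition $f-g=\sum_j(\Phi_j-g_j)$ with $g_j$ taken as conditional expectations under $U$, and the identity $\E_{X\sim U}((\Phi_j(X)-g_j(X^{(j)}))^2)=\E_{X\sim U}(\Var(\Phi_j(X)\mid X^{(j)}))$. The only difference is where Cauchy--Schwarz is applied: the paper applies it to each cross term $\E_{X\sim U}(R_iR_j)$, which gives the slightly sharper intermediate bound $(\sum_j\sqrt{\E_{X\sim U}(R_j^2)})^2$, whereas you apply it pointwise as $(\sum_j a_j)^2\le d\sum_j a_j^2$; both end at $d^2\eps$.

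One small correction to your aside: with the global baseline you already have $\E_{X\sim U}(f(X)-g(X))=\sum_j\E_{X\sim U}(R_j(X))=0$. So replacing the constant by $\E_{X\sim U}(f(X))$, without re-centering the $g_j$, could only increase the error, not lower it.
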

The proof can be found in Appendix \ref{app:proof-GAM-expl-var}.\\
Although most models might not be globally additive, they might still yield an approximately additive structure in local regions, which would make an approximate, local version of Interpretation \ref{shap-interpretation-1} and \ref{shap-interpretation-2} realistic.
The additive model $g$, as defined in Theorem \ref{Thm-GAM-expl-var}, can be estimated by regressing the SHAP values with a one-dimensional model. The resulting mean-squared error can serve as a measure for how close the model is to being additive in the region of interest, and hence, how strongly a SHAP dependence plot links to the prediction there. This allows for choosing a region accordingly before further studying the SHAP values with our explanation card.

\subsection{Explanation Cards Provide the User with Valid Interpretations}
\label{subsec:shap-explanation-card}

In this section, we propose an explanation card for SHAP, targeted towards a doctor aiming to understand a prediction.
The explanation card is presented in Figure \ref{fig:explanation_card_shap_good}. It reports the SHAP values along with the information that the user needs to arrive at correct interpretations.
In the header, the explanation card reports a local region around the explained instance in which the subsequent analyses are performed. Here, we consider the hyperbox defined by the 50 nearest neighbors with respect to the maximum-norm.
Below, the card shows the SHAP values of patients within this region in a dependence plot, separately for each feature. Due to space constraints, we only present this for the feature ``age''. For a full card, see Figure \ref{fig:explanation_card_shap_good_full} in Appendix \ref{app:shap-expl-card}. In addition to the SHAP dependence plot, we display a regression of the SHAP values by the corresponding feature as an estimate for the effect of the that feature on the prediction. Also, for every point in the region, we vary the age and investigate the resulting change in prediction. We then take the maximum and minimum of the change in prediction over all points in the region, leading to the dotted lines in the plot. That means that if a feature value is replaced, the change in prediction stays within the two lines. They thus give an error bound for 
Interpretation \ref{shap-interpretation-1} and \ref{shap-interpretation-2}. If the envelope between the two lines is too large (see Figure \ref{fig:SHAP-DP bad with envelope}), the plot cannot be related to the prediction behavior and interpretations like \ref{shap-interpretation-1} and \ref{shap-interpretation-2} become invalid.
For more technical details on how the card is derived from the SHAP explanations, see Appendix \ref{app:implementation-shap}.

In the example presented in Figure \ref{fig:explanation_card_shap_good}, the explanation card 
confirms an approximately additive structure with a small envelope, admitting approximate versions of Interpretation \ref{shap-interpretation-1} and \ref{shap-interpretation-2}. In cases where they do not hold, like in Figure \ref{fig:SHAP-DP bad ICE curve}, the explanation card reveals this: Figure \ref{fig:SHAP-DP bad with envelope} shows the same points as in Figure \ref{fig:SHAP-DP bad ICE curve} but with the dotted lines, which form a big envelope.
\begin{figure}[t]
    \centering
    \includegraphics[width=0.7\linewidth]{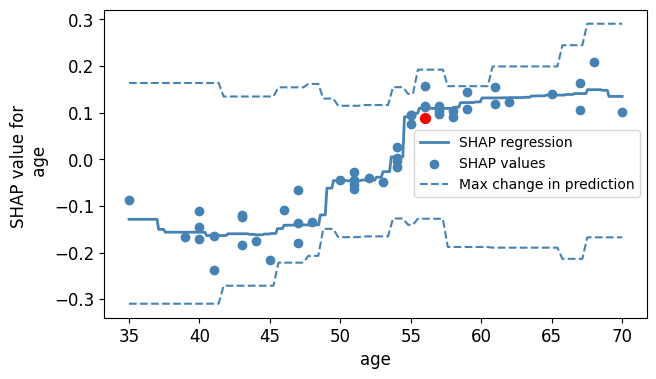}
    \caption{The fact that the dotted envelope is so large reveals that the SHAP dependence plot (blue points) does NOT reflect how the prediction function changes with age.}
    \label{fig:SHAP-DP bad with envelope}
\end{figure}
Another negative example of a SHAP explanation card with a huge dotted envelope is shown in Figure \ref{fig:explanation_card_shap_bad} in Appendix \ref{app:shap-expl-card}. It is important to highlight that an explanation card is not guaranteed to make sense of every explanation. 
Instead, it can tell us, for a given explanation, \emph{whether} one can derive a specific intuitive interpretation from it, and if so, it can translate the mathematical information to human-understandable language. Especially in the case of SHAP dependence plots, complex models often carry interactions, making it impossible to connect the plot to the prediction. The main point of this section and our card is to make these cases transparent, rather than to suggest a recipe for making SHAP values universally meaningful.

\section{Explanation Cards as an Approach for AI~Act Compliance}
\label{sec:legal}

\paragraph{General Setup of the EU AI~Act.} 
Within the European Union, the AI Act (AIA, Regulation (EU) 2024/1689) sets specific rules for the use of AI (see \citet{kaminski2025american} for a broad introduction). The AI Act distinguishes between providers (Art. 3(3) AIA) --- (in simplified terms) the entities that develop the AI system --- and deployers (Art. 3(4) AIA) --- (again simplified) the entities that use an AI system under their authority. Particular attention is given to AI systems that are classified as high-risk (Art. 6 ff. AIA). High-risk AI systems must comply with detailed rules to ensure that they are safe for the European market (for a detailed discussion, see \citet{paul2024european}).

\paragraph{Transparency Requirements of Providers and Deployers. }
The AI Act mandates compliance with Art. 13 AIA, which {requires providers to ensure transparency} and to supply information to the deployer (see Art. 16(a) AIA).
Art. 13 AIA focuses on transparency between the providers and the deployers of a high-risk AI system, with the objective to mitigate the risks associated with such systems (see Recitals 66 and 72 AIA) and to provide usable explanations \cite{nannini2023explainability}. According to Art. 13(1) AIA:
``High-risk AI systems shall be designed and developed in such a way as to ensure that their operation is sufficiently transparent to enable deployers to interpret a system’s output and use it appropriately [...]''.
{The aim, therefore, is to ensure that deployers are able to understand the AI system they use.} Art. 13(2) AIA and Art. 13(3) AIA set out further requirements on instructions. Specifically, Art. 13(3)(b)(vii) AIA requires that the provider’s instructions to the deployer contain
``where applicable, information to enable deployers to interpret the output of the high-risk AI system and use it appropriately''.
{Thus, the deployer should be able to interpret the output of the high-risk system appropriately with the use of the instructions.} The instructions are supposed to provide transparency \cite{panigutti2023role} and are in general found in many other product safety regulations within the EU as part of the New Legislative Framework \cite{ortigosa2025transparency}.
Art. 13(3)(b)(iv) AIA also points in this direction, requiring that the instructions include ``where applicable, the technical capabilities and characteristics of the high-risk AI system to provide information that is relevant to explain its output''. All of these requirements focus on the understanding by the deployer of the capabilities, but also the limitations, of the performance of the AI system (Art. 13(3)(b) AIA \& Recital 72 AIA).

\paragraph{Addressing These Requirements Through Explanation Cards.}
To meet the transparency, explainability, and interpretability obligations under Art. 13 AIA, providers will rely in practice heavily on harmonised standards. Currently being developed by CEN-CENELEC, these standards will guide practical implementation (see \cite{M613}, \citet{kilian2025european}); adhering to them grants providers a presumption of conformity (Art. 40(1) AI Act).
From our perspective, {Explanation Cards can help address the requirements} outlined in Article 13 AIA. As presented in Section \ref{sec:counterfactual-explanations} and \ref{sec:SHAP}, Explanation Cards offer multiple benefits in the context of Art. 13 AIA. They provide a clear, accessible, and comprehensive way to communicate explanations about the system’s outputs, focusing on the individual decisions made by the system. As a result, Explanation Cards help deployers understand the capabilities of the system. Additionally, Explanation Cards also highlight the limitations of the explanations themselves, offering a more nuanced perspective. This, in turn, leads to a better understanding when assessing the system’s performance and its limitations, as deployers gain greater insight into the entirety of the system. A key benefit is that deployers can readily understand Explanation Cards in professional contexts. Therefore, Explanation Cards could be {one} valuable tool for fulfilling the duties set out in Art.~13 AIA.

\paragraph{Transparency Requirements Towards Individuals Affected by an AI System.}
Beyond the transparency requirements in Art. 13 AIA, Explanation Cards could be helpful for other legal transparency requirements. For example, they may support communication of explanations to individuals affected by an AI system under Art. 86 AIA. Likewise, in cases of solely automated decision-making under Art. 22 GDPR (read in conjunction with Arts. 13–15 GDPR), some form of explanation to the data subject is required, see \citet{nannini2024habemus,kaminski2025right} for a detailed discussion of the relationship of these ``end-user'' explainability requirements. Presenting an output together with its explanation card in a concise and understandable format can support clear communication of both the explanation and its limitations to laypersons. Overall, we believe Explanation Cards could offer value across stakeholders and scenarios for legal compliance and beyond.

\section{Discussion}
\label{sec:discussion}

\paragraph{Explanation Cards to Communicate Valid Interpretations.}
{In this work, we propose Explanation Cards for Explanation Algorithms,
an approach that makes explicit which interpretations and conclusions can be drawn from particular post-hoc explanations --- and which ones cannot.} The goal of the card is to connect the mathematically abstract concept of local post-hoc explanations to real-world applications and to avoid misinterpretations of explanations.  
In order to derive reliable interpretations, explanation providers need to exploit their knowledge about the underlying decision function (for example, its local robustness or stability) or run additional analyses (for
example, evaluate the function's or explanation's behavior in a neighborhood of the
point of interest). In making these explicit,  explanation cards provide additional
information that goes beyond the standard output of particular explanation
algorithms.

\paragraph{Making the Limitations of Interpretations Explicit.} {If an explanation
does not admit the interpretations that humans tend to make, it is
important to point this out to prevent false conclusions.} 
 For instance, the region of stability around a counterfactual explanation might be tiny, in which case it would be hard for an affected individual to change their features such that they are within this region (as in the right panel in Figure 
\ref{fig:random_forest_vs_decision_tree}). 
 Or the dotted envelope around the SHAP dependence plot might be so large that the plot cannot properly connect to the prediction (see Figure \ref{fig:SHAP-DP bad with envelope}).

\paragraph{Concrete Examples: Counterfactuals and SHAP.}
Unlike related
approaches, for example, datasheets for datasets
\citep{gebru2021datasheets}, there is no single explanation card that
works for all explanation algorithms. The card needs to be designed for a particular explanation algorithm and a particular use case, taking into account the goals, desired interpretations, and background knowledge of the explanation receiver. 
We present two exemplary explanation cards: counterfactual explanations for loan decisions and SHAP explanations for medical diagnosis.  
Counterfactual explanations tend to possess interpretations that are easy to understand, and it is conceptually easy to see how to derive the corresponding mathematical guarantees. 
As SHAP explanations are a method without an inherent interpretation, it is much less obvious what additional quantities an explanation card should contain. We developed a variant for dependence plots. The card is still more technical than the counterfactual card due to the more technical definition of SHAP.
We are curious to see whether other researchers find guarantees that are simpler to communicate --- or whether the SHAP algorithm simply does not admit interpretations that can be easily communicated to lay persons.

\paragraph{Related Approaches. }
The regions of stability around the input point from Section \ref{sec:invalid_cf} are similar in spirit to anchors \citep{Ribeiro18}. We explicitly acknowledge this connection: In our view, counterfactual explanations should provide anchors that describe their region of validity.
In the case of SHAP, the dotted lines are constructed using shifted ICE curves. A similar line of work suggests finding
regions of the input space that are free of interactions
\citep{gkolemis2024effector,herbinger2024decomposing}. These methods
may potentially be used in implementations of the SHAP explanation
card to find a suitable local region. 

\paragraph{Legal Considerations.} Explanations are especially important when it comes to high-stakes decisions, particularly in high-risk (societal) context. The use of AI Systems for such decisions is regulated through the EU AI Act, which lays out requirements regarding transparency and explainability. We believe that Explanation Cards might provide a concrete path forward to satisfy these requirements.

\paragraph{Future Work.} 
To enable explanation providers to design explanation cards, the research community needs to provide theoretical and
empirical analyses that make explicit which interpretations are valid for which explanation algorithm 
in which scenarios and under which assumptions. Our paper is just a very first step in this direction. 
In the long run, we envision that the design of explanation cards also reveals which among the many explanation algorithms tend to admit useful, valid interpretations in concrete application cases. 

\section*{Acknowledgement}
This work has been supported by the German Research Foundation through the Cluster of Excellence “Machine Learning - New Perspectives for
Science" (EXC 2064/1 number 390727645) and Project 560788681, the Carl Zeiss Foundation through the CZS Center for AI and Law, and the International Max Planck Research School for Intelligent Systems
(IMPRS-IS).
Balázs Szabados was supported by the ``Robust Uncertainty Quantification for Learning and Control'' ADVANCED research project of the National Research, Development and Innovation Office of Hungary (NKFIH), grant number 153390.

\bibliographystyle{plainnat}
\bibliography{references}
\clearpage
\onecolumn
\appendix

\section{Proofs for Counterfactual Explanations}

\subsection{Proof of Theorem~\ref{thm:non-intuitive_implies_non-informative}}\label{proof:non-intuitive_implies_non-informative}

\begin{theorem}[Non-Intuitive Counterfactual Explanations Are Not Informative, Formal Version]
    Let $\mathcal{X}\subseteq\mathbb{R}^d$ be the data space with probability distribution $P$ that has a positive density. Let $\mathcal{F}$ be the class of all continuous functions with values in $[0,1]$. If counterfactual explanations are neither stable nor stable around the input, then they are not informative for any $f\in\mathcal{F}$, any $x_0\in\mathcal{X}$, and any $n\in\mathbb{N}$. In particular, the monotonicity of $x_C$ alone, that is, $\mathbb{X}=\{x_C\}$ is not sufficient for informativeness.
\end{theorem}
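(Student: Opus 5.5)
We need the informativeness definition from \citet{gunther2025informative}. As we recall it, an explanation algorithm $\Ecal$ is \emph{informative} for $f\in\mathcal{F}$ at $x_0$ (with sample size $n$) if knowing the explanation $\Ecal(f,x_0)$ shrinks the set of functions still consistent with everything the user knows. Concretely, the set
\[\mathcal{F}_{\Ecal}(f,x_0)=\{g\in\mathcal{F}: \Ecal(g,x_0)=\Ecal(f,x_0)\}\]
must give a strictly better worst-case prediction of $f$ on some test set of positive probability than the full class $\mathcal{F}$ does, possibly also conditioning on $n$ labelled samples. The plan is to show the contrapositive. Suppose the only information is the pair $(x_0,x_C)$ together with $f(x_0)<t\le f(x_C)$, with no stability box around $x_C$ or $x_0$. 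Then for every $x\notin\{x_0,x_C\}$ and every value $v\in[0,1]$ there is a continuous $g$ that agrees with $f$ on everything the explanation fixes and satisfies $g(x)=v$. Hence the worst-case error anywhere off a null set is the same as without the explanation.

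The construction uses bump functions. Fix $f$, $x_0$ and $n$, and let $D$ be any finite set of sample points. Take an arbitrary test point $x\notin\{x_0,x_C\}\cup D$ and a target value $v\in[0,1]$. Let $\rho>0$ be smaller than the distance from $x$ to $\{x_0,x_C\}\cup D$, and set
\[g=(1-\psi)f+\psi v,\]
where $\psi$ is a continuous bump with $\psi(x)=1$ and support in $B_\rho(x)$. Then:
\begin{itemize}
\item $g$ is continuous and $[0,1]$-valued.
\item $g$ agrees with $f$ at $x_0$, at $x_C$ and on $D$.
\item $g(x)=v$.
\end{itemize}
We also need $x_C$ to remain the output of the counterfactual algorithm for $g$. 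This is where the hypothesis enters. Because the explanation certifies no stability region, it only records the values (or the threshold side) at the two points, and a perturbation far from both points does not change it. If the explanation is defined as a closest point of the other class, we additionally need the perturbation to create no new point of class $\ge t$ closer to $x_0$ than $x_C$. We handle this by choosing $v<t$ whenever $\|x-x_0\|<\|x_C-x_0\|$. For points outside that ball, any $v$ is allowed. Inside the ball, we still have $g(x)$ ranging over all of $[0,t)$, and $f$ itself already lies in $[0,t)$ there, so the uncertainty interval is only shrunk if the prior class already restricted it.

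We expect this interplay with the minimality of the counterfactual to be the main obstacle, so we would state informativeness relative to $\mathcal{F}$ restricted to functions that share that counterfactual.

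Since $P$ has a positive density, the excluded set $\{x_0,x_C\}\cup D$ is $P$-null. So for $P$-almost every $x$, the set $\{g(x):g\in\mathcal{F}_{\Ecal}(f,x_0)\}$ equals the set obtained without the explanation. The sup-error over any set of positive measure is therefore unchanged, and $\Ecal$ is not informative, uniformly in $f$, $x_0$ and $n$.

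For the monotonicity remark, take $\mathbb{X}=\{x_C\}$. Monotonicity then additionally says $g(x_C+\lambda e_j)\ge t$ for all $\lambda\ge0$. We modify the construction: for test points off the ray, choose $\rho$ so that the bump misses the ray, which is possible because the ray is closed. For points on the ray, the constraint only gives $g\ge t$ there, but the ray is a null set under $P$. The same argument then goes through verbatim.
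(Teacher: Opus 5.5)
Your proposal has two genuine gaps: one in what is being proved, and one in what the explanation is taken to convey.

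\textbf{What must be proved.} Informativeness here is the Rademacher condition $R_n(\Fcal^{x_0}_{\mathrm{explain}})<R_n(\Fcal^{x_0}_{\mathrm{predict}})$, so you need the equality $R_n(\Fcal^{x_0}_{\mathrm{explain}})=R_n(\Fcal^{x_0}_{\mathrm{predict}})$. Your conclusion is only pointwise: for $P$-a.e.\ single test point $x$, the set of attainable values $g(x)$ is unchanged. That does not control $\sup_g\sum_i\sigma_i g(x_i)$, which depends on the joint values at $n$ points. For example, the constant functions have full pointwise range $[0,1]$, yet their Rademacher complexity is of order $1/\sqrt{n}$, versus $1/2$ for all continuous functions. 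Your bump also goes in the wrong direction: it sits on top of $f$ and keeps $f$ fixed on $D$. What is needed is that for a.e.\ sample $x_1,\dots,x_n$ and \emph{every} $g\in\Fcal^{x_0}_{\mathrm{predict}}$ there is $\tilde g\in\Fcal^{x_0}_{\mathrm{explain}}$ with $\tilde g(x_i)=g(x_i)$ for $i=0,\dots,n$. Then the two suprema agree for every $\sigma$, and taking expectations gives equality. The paper obtains this by interpolating $g$ at $x_0,\dots,x_n$ together with a value $\tau\ge t$ at $x_C$, and uses Tietze for the ray in the monotone case. Your bump idea also works if you reverse roles: take $\tilde g=(1-\psi)g+\psi\tau$ with $\psi$ a bump at $x_C$ whose support avoids $x_0,\dots,x_n$.

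\textbf{What the explanation conveys.} The closest-point issue is not a technicality you can patch; under that reading the statement is false. If $\Ecal(g,x_0)=x_C$ means that $x_C$ is a nearest point with $g\ge t$, then every $g\in\Fcal^{x_0}_{\mathrm{explain}}$ satisfies $g<t$ on the ball $B=\{x:\|x-x_0\|<\|x_C-x_0\|\}$. The class $\Fcal^{x_0}_{\mathrm{predict}}$ is unconstrained there. Your remark that the interval is ``only shrunk if the prior class already restricted it'' is therefore wrong: the values of $f$ on $B$ are irrelevant to the comparison class. Since $P(B)>0$, consider the event that all $x_i\in B$ and all $\sigma_i=+1$. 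On this event the explained class has empirical mean at most $t$, while the predict class reaches $1$. The same argument as in the paper's proof for stable counterfactuals then makes the explanation informative whenever $t<1$. Your remedy, measuring against functions that already share the counterfactual, makes the two classes identical and the statement vacuous. The theorem holds only under the paper's reading: the explanation conveys just that $x_C$ lies in the other class, i.e.\ $\Fcal^{x_0}_{\mathrm{explain}}=\{g\in\Fcal^{x_0}_{\mathrm{predict}}: g(x_C)\ge t\}$, plus $g\ge t$ on the ray in the monotone variant, with no minimality and no stability box. With that definition and the joint-interpolation step above, your null-set argument for the ray goes through.
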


\begin{proof}
    Let $x_0\in\mathcal{X}$ be the point to be explained, and let $x_C\in\mathcal{X}$ be its counterfactual explanation. First, we prove that, without stability, counterfactual explanations are generally not informative. Let $\Fcal$ be the class of continuous functions mapping from $\mathbb{R}^d$ to $[0,1]$, and let $n\in\mathbb{N}$ be arbitrary. It is to be shown that $R_n(\Fcal_{\mathrm{explain}}^{x_0})=R_n(\Fcal_{\mathrm{predict}}^{x_0})$.

    Let us fix a sample $x_1,\dots,x_n$ with labels $\sigma_1,\dots,\sigma_n$. We will show that for every $g\in\Fcal_{\mathrm{predict}}^{x_0}$ there exists $\tilde g\in \Fcal_{\mathrm{explain}}^{x_0}$ so that $$\sum_{i=1}^n\sigma_ig(x_i)=\sum_{i=1}^n\sigma_i\tilde g(x_i).$$
    By showing this, the equality also holds when taking the supremum over the two function classes. Finally, taking the expectations over $x$ and $\sigma$ proves the result.

    It is easy to see that the above equality is always achievable. Let $\tilde g$ be such that $\tilde g(x_i)=g(x_i)$ for every $i\in\{0,\dots,n\}$, and $\tilde g(x_C)=\tau\geq t$. Since continuous functions can interpolate any finite number of points, such a function $\tilde g$ exists and, by construction, is a member of $\Fcal_{\mathrm{explain}}^{x_0}.$

    Now we show that the monotonicity of $x_C$ alone does not guarantee informativeness. If the counterfactual explanation is monotonous in feature $j$, it means that $f(x_C+\lambda e_j)\geq t$ for any $\lambda\geq0$. We will again show that for every $g\in\Fcal_{\mathrm{predict}}^{x_0}$ there exists $\tilde g\in \Fcal_{\mathrm{explain}}^{x_0}$ so that $$\sum_{i=1}^n\sigma_ig(x_i)=\sum_{i=1}^n\sigma_i\tilde g(x_i).$$
    To this end, we show the existence of a function $\tilde g$ for which $\tilde g(x_i)=g(x_i)$ for every $i\in\{0,\dots,n\}$ and $\tilde g(x_C+\lambda e_j)=\tau\geq t$ for any $\lambda\geq0$. Because the data distribution is continuous, it has probability $0$ that any Rademacher variable $x_i$ satisfies $x_i=x_C+\lambda e_j$ for some $\lambda\geq0$. Since $\{x_0,\dots,x_n\}\cup\{x\in\mathcal{X}\mid\exists \lambda\geq0:x=x_C+\lambda e_j\}$ is a closed set, due to the Tietze extension theorem such a function $\tilde g$ exists with probability $1$, and is a member of $\Fcal_{\mathrm{explain}}^{x_0}.$
\end{proof}

\subsection{Proof of Theorem~\ref{thm:intuitive_implies_informative}}
\label{app:coounterfactuals:intuitive_implies_informative}

\begin{theorem}[Stable Counterfactual Explanations Are Informative, Formal Version]\label{proof:intuitive_implies_informative}
    Let $\mathcal{X}\subseteq\mathbb{R}^d$ be the data space with probability distribution $P$ that has a positive density. Let $\mathcal{F}$ be the class of all continuous functions with values in $[0,1]$. If a counterfactual explanation $x_C\in\mathcal{X}$ for the data point $x_0\in\mathcal{X}$ is stable or stable around the input, then it is also informative for any $f\in\mathcal{F}$, and any $n\in\mathbb{N}$.
\end{theorem}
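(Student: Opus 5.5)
We show that the explanation strictly lowers the Rademacher complexity: $R_n(\Fcal_{\mathrm{explain}}^{x_0})<R_n(\Fcal_{\mathrm{predict}}^{x_0})$. This is the informativeness criterion of \citet{gunther2025informative} already used in the proof of Theorem~\ref{thm:non-intuitive_implies_non-informative}. Here $\Fcal_{\mathrm{predict}}^{x_0}$ is the set of functions in $\Fcal$ that agree with $f$'s prediction at $x_0$. The set $\Fcal_{\mathrm{explain}}^{x_0}$ additionally requires consistency with the full explanation. In the stable case this means $g\ge t$ on the open box $B=\{l<x<u\}\ni x_C$. In the stable-around-input case it means $g<t\le g(\cdot+(x_C-x_0))$ on $B=\{l<x<u\}\ni x_0$.

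Because $\Fcal_{\mathrm{explain}}^{x_0}\subseteq\Fcal_{\mathrm{predict}}^{x_0}$, the inequality $\le$ is immediate. The work is to show strictness.

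\textbf{Key observation.} Since $l<x_C<u$ (respectively $l<x_0<u$) and the bounds lie in $\bar{\mathbb R}$, the box $B$ is a nonempty open set. Because $P$ has a positive density, $p:=P(B)>0$.

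\textbf{Strictness in the stable case.} Consider the event that at least one sample point $x_i$ lands in $B$; it has probability $1-(1-p)^n>0$. Condition on such a sample and on the sign $\sigma_i=-1$ at that index.
\begin{itemize}
\item In $\Fcal_{\mathrm{predict}}^{x_0}$, Tietze/interpolation gives a continuous $g$ that matches the unrestricted supremum pattern: $g(x_k)=1$ where $\sigma_k=1$ and $g(x_k)=0$ where $\sigma_k=-1$. The only constraint is at $x_0$, and $x_0\notin B$ almost surely is not even needed, since $x_0$ is a single point and a.s.\ distinct from the $x_k$. So the supremum equals $\sum_k \mathbb 1[\sigma_k=1]$.
\item Every $g\in\Fcal_{\mathrm{explain}}^{x_0}$ satisfies $g(x_i)\ge t$, so the supremum drops by at least $t$.
\end{itemize}
Taking expectations gives a gap of at least $t\cdot\tfrac12(1-(1-p)^n)>0$, provided $t>0$. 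If $t=0$, the constraint "$\ge t$" is vacuous in $[0,1]$. In that degenerate case we would use the strict side instead, or assume $t\in(0,1]$ as implicit in a threshold.

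\textbf{Strictness in the stable-around-input case.} Here the constraint is $g(x)<t$ for $x\in B$. Take a sample with some $x_i\in B$ and $\sigma_i=+1$. The predict-class supremum can set $g(x_i)=1$, while the explain class forces $g(x_i)<t$. The loss is at least $1-t>0$ whenever $t<1$. The coupled constraint at the shifted points only shrinks the class further, so it is harmless for a lower bound on the gap. One should also check that the explain class is nonempty: $f$ itself belongs to it by assumption.

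\textbf{Main obstacle.} The delicate step is showing that the predict-class supremum is genuinely attained (or approached) while the explain class loses a fixed amount on a positive-probability event. This requires that the remaining constraints (at $x_0$, and at points of $B$ for the other samples) do not also bind on the predict class. I would handle this with the Tietze extension theorem on the finite closed set $\{x_0,x_1,\dots,x_n\}$. Then the gap can be lower bounded pointwise, sample by sample, by restricting to the single index $i$ whose point lies in $B$. Bounding $\sup\sum\sigma_k g(x_k)$ from above for the explain class by (unconstrained value) $-\min(t,1-t)$ on that event requires only a coordinatewise argument, since the remaining terms are each bounded by their unconstrained optimum.
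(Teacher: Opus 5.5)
Your argument is essentially the paper's. Both restrict to a positive-probability event on which sample points fall in the stability box with signs that make the box constraint bind, while the predict class interpolates freely. The paper takes all $n$ points in the box with all $\sigma_i=-1$ and cites Lemma 21 of \citet{gunther2025informative}. Your ``at least one point in $B$'' event, combined with pointwise domination (from $\Fcal_{\mathrm{explain}}^{x_0}\subseteq\Fcal_{\mathrm{predict}}^{x_0}$) and a direct expectation, is an equally valid version of the same step.

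There is one loose end in the stable-around-input case. Your gap there is $1-t$, which vanishes when $t=1$. The statement does not exclude $t=1$: only $t\le f(x_C)\le 1$ is forced. At $t=1$ the strict constraint $g(x_i)<1$ does not lower the supremum, so your argument as written gives no strict inequality.

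The fix is to use the other half of the hypothesis. Every $g\in\Fcal_{\mathrm{explain}}^{x_0}$ satisfies $g\ge t$ on the shifted open box $B+(x_C-x_0)$. This box contains $x_C$ and has positive probability. Your stable-case argument with $\sigma_i=-1$ then applies verbatim and gives a gap of at least $t$ for every admissible threshold.

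Your $t=0$ hedge is also unnecessary, because $0\le f(x_0)<t$ already forces $t>0$.
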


\begin{proof}
    We only prove that the stability of the counterfactual implies informativeness. Stability around the input, implying informativeness, can be proven similarly.

    Let $x_0\in\mathcal{X}$ be the point to be explained, and let $x_C\in\mathcal{X}$ be its counterfactual explanation. If $x_C$ is stable, it means that there exist vectors $l,u\in\bar{\mathbb{R}}^d$ such that $l<x_C<u$ and $f(x)\geq t$ for all $l<x<u$. Let $\mathcal{F}$ be the class of continuous functions mapping from $\mathbb{R}^d$ to $[0,1]$, and let $n\in\mathbb{N}$ be arbitrary. In order to show that $R_n(\Fcal_{\mathrm{explain}}^{x_0})<R_n(\Fcal_{\mathrm{predict}}^{x_0})$, it is enough to prove that $R_n(\Fcal_{\mathrm{explain}}^{x_0}\mid A)<R_n(\Fcal_{\mathrm{predict}}^{x_0}\mid A)$ for some event $A$ with positive probability by Lemma 21 of \citet{gunther2025informative}.

    Let $A$ be the event that the Rademacher variables satisfy $\forall i\in[n]: l<x_i<u$ and have labels $\forall i\in[n]: \sigma_i=-1$. Because the data distribution is absolutely continuous, and the set $\{x\in\mathcal{X}\mid l<x<u\}$ has positive volume, event $A$ has non-zero probability. Since it holds that $\forall g\in\Fcal_{\mathrm{explain}}^{x_0}: g(x_i)\geq t$, and functions in $\Fcal_{\mathrm{predict}}^{x_0}$ may have values arbitrarily close to $0$ at the Rademacher points $x_i$, it is easy to see that we have $R_n(\Fcal_{\mathrm{explain}}^{x_0}\mid A)<R_n(\Fcal_{\mathrm{predict}}^{x_0}\mid A)$, which was to be shown.
\end{proof}

\section{Explanation Cards for Counterfactual Explanations}
\label{app:counterfactual-expl-cards}

Below, we show a another example of an explanation card for counterfactual explanations.

\FloatBarrier

\begin{figure*}[h]
\begin{tcolorbox}[title={Explanation Card for Counterfactual Explanations in Income Prediction, Targeting the Individual}, enhanced,
    colback=white, colframe=black!60,
    boxrule=0.5pt, sharp corners,
    width=\textwidth]
\hlA{An automated system has predicted low income for you because your educational attainment and weekly working hours are too low. To receive a favorable prediction, you are suggested to obtain a master's degree beyond your bachelor's degree, and work 47 hours weekly instead of 25.}\\[0.5em]
\hlB{The action suggested by the counterfactual explanation remains valid as long as your education does not exceed a master's degree, and you work }\textit{\hlB{at least}}\hlB{ 32 hours per week. Your marital status does not influence the prediction.}\\[0.5em]
\hlD{However, changes in other features, such as occupation, class of work, etc., may alter the prediction. In these cases, the prediction may no longer be favorable.}\\[0.5em]
\hlC{Similar applicants in the 33-52 age group with a bachelor's degree, who work 25-31 hours per week, are predicted to have low income for the same reason and are advised of the same changes.}
\end{tcolorbox}
\caption{\legendbox{color1} Counterfactual. \quad
\legendbox{color2} Stability of the Counterfactual. \quad
\legendbox{color3} Stability around the input. \quad
\legendbox{color4} Further caveat. \quad\\[8pt]
{\bf Explanation Card for a Counterfactual Explanation in Income Prediction.} The explanation card provides the machine decision, the counterfactual, as well as the necessary information about the stability that is required for interpretation. It also provides information about the {\it monotonicity} of the counterfactual. In this example, the classifier is a decision tree of depth 10 trained on ACSIncome.}\label{fig:explanation_card_counterfactual_acsincome2}
\end{figure*}
\FloatBarrier

\section{Formal Definition and Proofs for SHAP}

\subsection{Definition of SHAP}
\label{app:SHAP-def}
Consider a function $f$ and a probability distribution $\mathrm{P}$ on $\Xcal\subseteq\R^d$. For attributing every single feature, one first defines a value function $v:\mathcal{P}([d]) \times \Xcal \to \R$, where $\mathcal{P}$ denotes the power set. Given a subset of features $S\subseteq[d]$ and a point $x\in\Xcal$, this value function measures the average value of the function $f$ in $x$ if only the features $S$ are known. The most common value function, which is also considered in this paper, is the interventional one, see \citet{Lundberg17}. It is given by
$$v(S,x):=\E_{X^{(\bar{S})}}\left(f\left(x^{(S)},X^{(\bar{S})}\right)\right),$$
meaning it averages out the remaining features using the marginal expectation. The SHAP values are then computed by the formula
$$\Phi_j(x) = \sum_{S\subseteq [d]\setminus j} {\frac{|S|!(d-|S|-1)!}{d!}}\cdot \Big(v(S\cup j, x)-v(S,x)\Big).$$ 
It can be shown that the SHAP values based on the interventional value function indeed satisfy 
\[f(x) = \Phi_1(x) + ... + \Phi_d(x)+\E(f(X)),\]
see \citet{Lundberg17}.

\subsection{Proof of Theorem \ref{Thm-GAM-expl-var}}
\begin{proof}
    Define $g_j(x_j):=\E_{X\sim U}(\Phi_j(X)\mid X^{(j)}=x^{(j)})$ and consider the GAM $g(x):=g_1(x^{(1)})+...+g_d(x^{(d)}) + \E(f(X))$. 
	We then compute
		\begin{align*}
			\E_{X\sim U}((f(X)-g(X))^2) 
            &= \E_{X\sim U}\left(\left(\sum_{j=1}^d\Phi_j(X) - g_j(X^{(j)})\right)^2\right)\\
			&= \E_{X\sim U}\left(\sum_{i,j=1}^d \left(\Phi_i(X) - g_i\left(X^{(i)}\right)\right)\cdot\left(\Phi_j(X) - g_j\left(X^{(j)}\right)\right)\right)\\
			&= \sum_{i,j=1}^d \E_{X\sim U}\left( \left(\Phi_i(X) - g_i\left(X^{(i)}\right)\right)\cdot\left(\Phi_j(X) - g_j\left(X^{(j)}\right)\right)\right)\\
			&\le \sum_{i,j=1}^d \sqrt{\E_{X\sim U}\left( (\Phi_i(X) - g_i(X^{(i)}))^2\right)}\cdot \sqrt{\E_{X\sim U}\left( (\Phi_j(X) - g_j(X^{(j)}))^2\right)}\\
            &= \sum_{i,j=1}^d \sqrt{\E_{X\sim U}\left(\Var(\Phi_i(X)\mid X^{(i)})\right)}\cdot \sqrt{\E_{X\sim U}\left(\Var(\Phi_j(X)\mid X^{(j)})\right)}\\
			&\le d^2\eps.
		\end{align*}
\end{proof}

\subsection{A Version of the Reverse Statement of Theorem \ref{Thm-GAM-expl-var}}
\label{app:proof-GAM-expl-var}
\begin{theorem}
    If $f$ is close to a GAM $g$ w.r.t. $\Lcal^\infty$, which is $|f(x)-g(x)|\le \eps$ for all $x$ in the extended support, then the SHAP value of feature $j$ almost only depends on $x_j$ in the sense that
    $$\E\left(\Var\left(\Phi_j(X)\mid X^{(j)}\right)\right)\le 4\eps^2.$$
\end{theorem}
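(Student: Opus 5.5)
The approach is to exploit two facts: interventional SHAP is linear in $f$, and for a GAM it returns exactly the centered components (as shown by \citet{BorLux_shap_2023}). Write $f = g + h$ with $h := f-g$. By assumption $|h(x)|\le\eps$ on the extended support, meaning every point of the form $(x^{(S)},X^{(\bar S)})$ at which the interventional value function evaluates $f$. Since $v(S,x)$ is linear in $f$, so is $\Phi_j$, giving
\[
\Phi_j^f(x)=\Phi_j^g(x)+\Phi_j^h(x).
\]
By the additive case, $\Phi_j^g(x)=g_j(x^{(j)})-\E(g_j(X^{(j)}))$, which is a function of $x^{(j)}$ alone. It therefore drops out of the conditional variance:
\[
\Var\left(\Phi_j^f(X)\mid X^{(j)}\right)=\Var\left(\Phi_j^h(X)\mid X^{(j)}\right).
\]

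The next step is a uniform bound on $\Phi_j^h$. Each value $v^h(S,x)$ is an average of values of $h$ on the extended support, so $|v^h(S,x)|\le\eps$, and each marginal contribution satisfies $|v^h(S\cup j,x)-v^h(S,x)|\le 2\eps$. The Shapley weights over $S\subseteq[d]\setminus j$ are nonnegative and sum to $1$, so $|\Phi_j^h(x)|\le 2\eps$ for all $x$.

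Finally, use the elementary bound that the conditional variance is at most the conditional second moment:
\[
\Var\left(\Phi_j^h(X)\mid X^{(j)}\right)\le\E\left((\Phi_j^h(X))^2\mid X^{(j)}\right)\le 4\eps^2.
\]
Taking expectations yields $\E(\Var(\Phi_j(X)\mid X^{(j)}))\le 4\eps^2$. Note that this uses only boundedness; halving the range would even give $\eps^2$, but $4\eps^2$ suffices for the statement.

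The main obstacle is being precise about the \emph{extended support}. The interventional value function evaluates $f$ at hybrid points $(x^{(S)},X^{(\bar S)})$, which may lie off the data distribution, so the hypothesis must cover all such points for every $S$. I would state this definition explicitly, namely the product of the marginal supports.

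A further point needs care. Applying the GAM result to $g$ requires linearity of $\Phi$ to hold even though $g$ need not be centered. I would therefore check that the constant $\E(g(X))$ only shifts the baseline and does not affect any $\Phi_j$; this holds because constants cancel in $v(S\cup j,x)-v(S,x)$.
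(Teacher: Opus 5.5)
Your proof is correct and essentially the paper's argument. The paper bounds $|\Phi_j^f-\Phi_j^g|\le 2\eps$ through the value functions, which is your $|\Phi_j^h|\le 2\eps$ via linearity. It then uses that $\E(\Phi_j^f(X)\mid X^{(j)})$ is the best $L^2$-approximation among functions of $X^{(j)}$, which is the same fact as your observation that $\Phi_j^g$ drops out of the conditional variance. Your grouping $|v^h(S\cup j,x)-v^h(S,x)|\le 2\eps$ is also cleaner than the paper's displayed triangle inequality, which pairs the terms as $|v_f(S\cup j)-v_f(S)|+|v_g(S\cup j)-v_g(S)|$; these are not what the hypothesis bounds by $\eps$.

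The only thing to drop is your aside that halving the range gives $\eps^2$. For a variable in $[-2\eps,2\eps]$, Popoviciu's inequality gives again $(4\eps)^2/4=4\eps^2$, so boundedness alone yields no improvement.
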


\begin{proof}
    By using our uniform bound, we can bound the difference of the value functions of $f$ and those of the GAM $g$ by 
		$$|v_f(S, x)-v_g(S,x)|=|\E_{X_{\bar{S}}}(f(x_S,X_{\bar{S}})-g(x_S,X_{\bar{S}}))| \le |\E_{X_{\bar{S}}}(\eps)| = \eps.$$
		Plugging this into the formula for the SHAP values of $f$ and $g$, we receive
		\begin{align*}
			\left|\Phi^f_j(x) - \Phi^g_j(x)\right| &= \left|\sum_{S\subseteq \{1,...,d\}\setminus j} c_S\cdot( (v_f(S\cup j)-v_f(S)) - (v_g(S\cup j)-v_g(S)))\right| \\
			&\le \sum_{S\subseteq \{1,...,d\}\setminus j} c_S\cdot\left(\left| v_f(S\cup j)-v_f(S)\right| + \left| v_g(S\cup j)-v_g(S)\right|\right)\\
			&\le \sum_{S\subseteq \{1,...,d\}\setminus j} 2c_S\eps\\
			&=2\eps
		\end{align*}
		as the coefficients $c_S=\frac{|S|!(d-|S|-1)!}{d!}$ sum up to 1 \citep{shapley1953value}. As the SHAP values of a GAM are its (mean-centered) component functions $g_j$ \citep{BorLux_shap_2023} and hence only depend on the regarding feature, this  particularly implies 
		\begin{align*}
			\E\left(\Var\left(\Phi^f_j(X)\mid X^{(j)}\right)\right) &= \E\left (\left(\Phi^f_j(X)-\E\left(\Phi^f_j(X)\mid X^{(j)}\right)\right)^2\right )\\
			&\le \E\left (\left(\Phi^f_j(X)- g_j\left(X^{(j)}\right)\right)^2\right )\\
			&= \E\left (\left(\Phi^f_j(X)- \Phi^g_j\left(X^{(j)}\right)\right)^2\right )\\
			&\le 4\eps^2.
		\end{align*}
\end{proof}

\subsection{Formal Statement and Proof of Theorem \ref{Thm-shap-informative}}
\label{app:proof-shap-informative}

\begin{theorem}[Formal Version of Theorem \ref{Thm-shap-informative}]
    Consider a dataspace $\Xcal\subseteq\R^d$ with a probability distribution $\mathrm{P}$ that has a positive density. Let $\Ecal(f,x_0)=((\Phi_1(x_0),...,\Phi_d(x_0)), U, (\eps_1,...,\eps_d))$ be the explanation which returns a function's SHAP values, a region $U$ around $x_0$ and tolerance thresholds $\eps_1,...,\eps_d$ such that the SHAP values of $x_0$ are locally stable with respect to $U$ and $\eps_1,...,\eps_d$. Let further $\Fcal$ be the class of all trees or all continuous functions with values in $[0,1]$. If $\eps_1+...+\eps_d<f(x_0)$ or $\eps_1+...+\eps_d<1-f(x_0)$, the explanation  $\Ecal(f,x_0)$ is informative for all $x_0\in\Xcal$, all $f\in\Fcal$ and all $n\in\mathbb{N}$.
\end{theorem}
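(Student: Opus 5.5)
The argument follows the template of the proof of Theorem~\ref{proof:intuitive_implies_informative}. By Lemma 21 of \citet{gunther2025informative}, to show $R_n(\Fcal_{\mathrm{explain}}^{x_0})<R_n(\Fcal_{\mathrm{predict}}^{x_0})$ it suffices to find an event $A$ of positive probability on which $R_n(\Fcal_{\mathrm{explain}}^{x_0}\mid A)<R_n(\Fcal_{\mathrm{predict}}^{x_0}\mid A)$. Here $\Fcal_{\mathrm{explain}}^{x_0}$ is the set of $g\in\Fcal$ that produce the same explanation $\Ecal(g,x_0)=\Ecal(f,x_0)$. These are functions with the same SHAP values at $x_0$, the same region $U$ and the same thresholds $\eps_j$, and whose SHAP values are locally stable on $U$.

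\textbf{Step 1: a range restriction on $U$.} For every $g\in\Fcal_{\mathrm{explain}}^{x_0}$ and every $x\in U$, the efficiency property $g(x)=\sum_j\Phi_j^g(x)+\E(g(X))$ holds. Together with local stability, this gives
$$|g(x)-g(x_0)|\le\sum_{j=1}^d|\Phi^g_j(x)-\Phi^g_j(x_0)|\le \eps_1+\dots+\eps_d=:\eps.$$
Identical SHAP values at $x_0$ do not by themselves pin down $g(x_0)$, since $\E(g(X))$ may differ. I will therefore treat the output $f(x_0)$ as part of the explanation, or equivalently read the baseline $\E(f(X))$ off the explanation, so that $g(x_0)=f(x_0)$. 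This is the point where care is needed. Under this reading, every $g\in\Fcal_{\mathrm{explain}}^{x_0}$ satisfies $g(x)\in[f(x_0)-\eps,f(x_0)+\eps]$ on $U$.

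\textbf{Step 2: the event.} Assume first that $\eps<f(x_0)$. Let $A$ be the event that all $x_i\in U$ and all $\sigma_i=-1$. Since $U$ has positive probability under a positive density, $P(A)>0$. On $A$,
$$\sup_{g\in\Fcal_{\mathrm{explain}}}\frac1n\sum_i\sigma_ig(x_i)\le -(f(x_0)-\eps)<0.$$
$\Fcal_{\mathrm{predict}}^{x_0}$ only constrains the value, or the predicted label, at $x_0$. Continuous functions, and likewise trees with a split isolating $x_0$ from the finitely many sample points, can therefore take values arbitrarily close to $0$ at the $x_i$ almost surely, which makes the supremum $0$. This gives the strict inequality. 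In the case $\eps<1-f(x_0)$, take $\sigma_i=+1$ instead: explaining functions are bounded above by $f(x_0)+\eps<1$ on $U$, while predicting functions can reach values near $1$.

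\textbf{Main obstacle.} The delicate step is the justification in Step 1 that $g(x_0)$, or $\E g$, is fixed by the explanation. Without it, explaining functions could shift the baseline and the range bound would fail. I would resolve this by including the prediction, as the card does, and by checking that $\Fcal_{\mathrm{explain}}$ is contained in $\Fcal_{\mathrm{predict}}$ with matching $g(x_0)$. A smaller point is the tree case: I need to check that piecewise-constant functions can realize near-extreme values at sample points in $U$ while matching the value at $x_0$. This holds almost surely, because $x_i\neq x_0$ with probability one.
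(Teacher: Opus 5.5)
Your proposal is correct and follows the paper's proof essentially step for step. Efficiency plus local stability confines every explaining function to $[f(x_0)-\eps, f(x_0)+\eps]$ on $U$; the event that all $x_i$ lie in $U$ with all $\sigma_i=+1$ (resp.\ $-1$) then separates the conditional Rademacher complexities, and Lemma 21 of \citet{gunther2025informative} finishes the argument. The paper's remark that ``the SHAP values give us the expected value of $f$'' silently assumes that $g(x_0)=f(x_0)$ is fixed in $\Fcal_{\mathrm{predict}}^{x_0}$, and you make this explicit, along with $P(U)>0$; that tightens the same argument rather than changing it.
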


\begin{proof}
    Let $n\in\mathbb{N}$ be arbitrary. We know that $\Phi_1(x)+...+\Phi_d(x) = f(x)-\E(f(X))$. Like that, the SHAP values give us the expected value of $f$. Furthermore, calling $\eps:=\eps_1+...+\eps_d$, we can conclude that $f(x)\in [f(x_0)-\eps,f(x_0)+\eps]$ for all $x\in U$. %
    So for all $x\in U$, we particularly know
    \begin{align*}                          \sup_{g\in\Fcal_{\mathrm{explain}}^{x_0}} \frac{1}{n}\sum_{i=1}^n g(x_i)\le f(x_0)+\eps
    \end{align*}
    and
    \begin{align*}
        \sup_{g\in\Fcal_{\mathrm{explain}}^{x_0}} \frac{1}{n}\sum_{i=1}^n -g(x_i)\le -(f(x_0)-\eps).
    \end{align*}
    
    Now consider the event $A$ that $x_1,...,x_n$ all fall into $U$ and that $\sigma_1=...=\sigma_n=1$ (if $\eps<1-f(x_0)$) resp. $\sigma_1=...=\sigma_n=-1$ (if $\eps<f(x_0)$). We can then conclude 
    \begin{align*}
        R_n(\Fcal_{\mathrm{explain}}^{x_0}\mid A) \le f(x_0)+\eps <1 \quad (\text{resp. } \le -(f(x_0)-\eps) < 0)
    \end{align*}
    whereas 
    $$R_n(\Fcal_{\mathrm{predict}}^{x_0}\mid A)=1 \quad (\text{resp.} =0)$$
    as our function class $\Fcal_{\mathrm{predict}}^{x_0}$ can achieve the value 1 (resp. 0) for all Rademacher points. Using Lemma 21 of \citet{gunther2025informative}, it follows that the explanation is informative.
\end{proof}

\section{Explanation Cards for SHAP Explanations}
\label{app:shap-expl-card}

Figure \ref{fig:explanation_card_shap_good_full} shows an example of a full SHAP explanation card with all features. \\
In Figure \ref{fig:explanation_card_shap_bad}, we see an example where the explanation card reveals that the dependence plot does not reflect the feature's effect on the prediction. 

\begin{figure*}
\tcbset{
  cardstyle/.style={
    enhanced,
    colback=white,
    colframe=black!60,
    boxrule=0.5pt,
    sharp corners,
  }
}
\begin{tcolorbox}[
    cardstyle,
    title={Explanation Card for SHAP Explanations in Medical Diagnosis, Targeting the Doctor (Part 1)},
    width=\textwidth
]

    \hlA{The patient is classified as sick with a predicted score of $0.565$. In general, patients with positive scores are classified as sick; the higher the score, the more certain the classification algorithm. The SHAP values for each feature are highlighted with a red dot in the plots below.} \\[0.5em]
    \hlB{The plots below show the SHAP values of patients in a local region around the explained instance: \\% and the range of the relevant feature is colored in the plot.
    }
    \begin{tabularx}{\linewidth}{rXrXrX}
        \hlB{Age:} &\hlB{29-62} & \hlB{Sex:} & \hlB{Male} & \hlB{Resting BP:} & \hlB{112-160 mmHG}\\
        \hlB{Cholesterol:} & \hlB{157-325 mg/dl} & \hlB{Fasting blood sugar:} & \hlB{$< 120$ mg/dl} & \hlB{Max heart rate:}  & \hlB{120-202}\\
        \hlB{ST depression:} & \hlB{0-3} & \hlB{ST slope:} & \hlB{upsloping or flat} 
    \end{tabularx}\\[0.5em]
    \hlC{The SHAP values or other patients in the local region (blue dots) and their regression (blue line) roughly reflect how age affects the prediction. The dotted lines show the maximal/minimal change in prediction if age is varied.} 
\tcblower

\begin{minipage}[t]{0.48\linewidth}
\begin{tcolorbox}[
    cardstyle,
    title={Feature: Age}
]
\includegraphics[width=\linewidth]{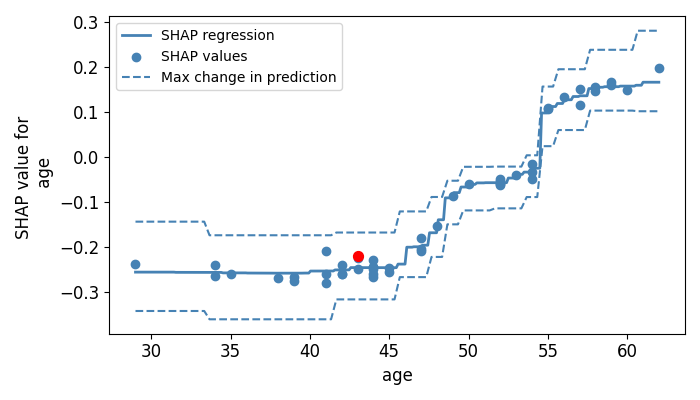}
\end{tcolorbox}
\end{minipage}\hfill
\begin{minipage}[t]{0.48\linewidth}
\begin{tcolorbox}[
    cardstyle,
    title={Feature: Sex}
]
\includegraphics[width=\linewidth]{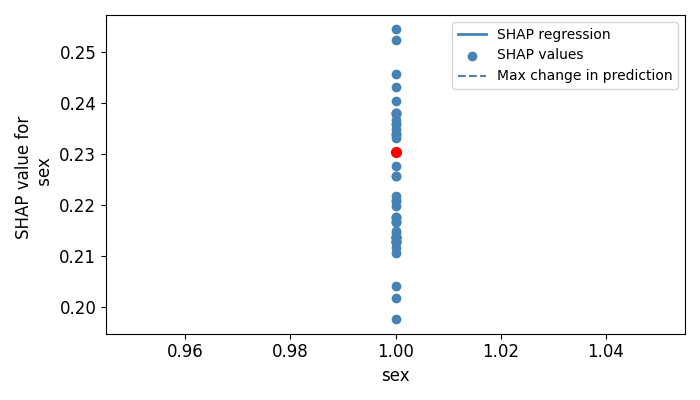}
\end{tcolorbox}
\end{minipage}

\begin{minipage}[t]{0.48\linewidth}
\begin{tcolorbox}[
    cardstyle,
    title={Feature: Type of Chest Pain}
]
\includegraphics[width=\linewidth]{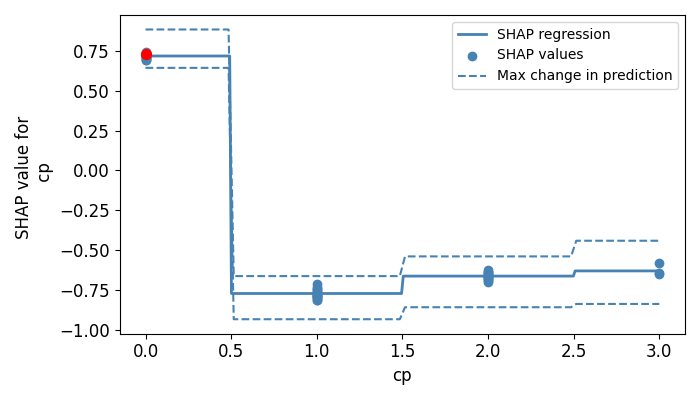}
\end{tcolorbox}
\end{minipage}\hfill
\begin{minipage}[t]{0.48\linewidth}
\begin{tcolorbox}[
    cardstyle,
    title={Feature: Resting Blood Pressure}
]
\includegraphics[width=\linewidth]{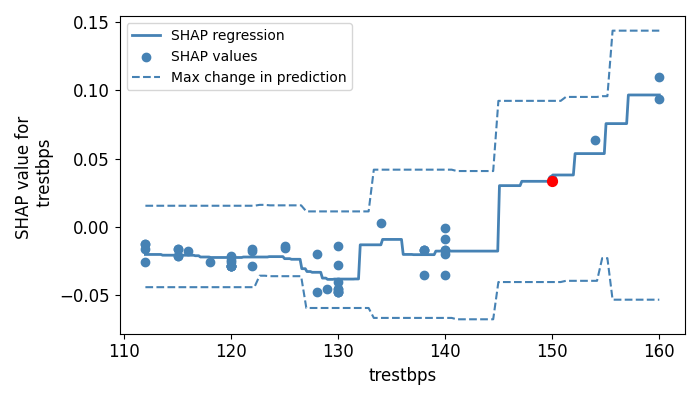}
\end{tcolorbox}
\end{minipage}

\begin{minipage}[t]{0.48\linewidth}
\begin{tcolorbox}[
    cardstyle,
    title={Feature: Cholesterol}
]
\includegraphics[width=\linewidth]{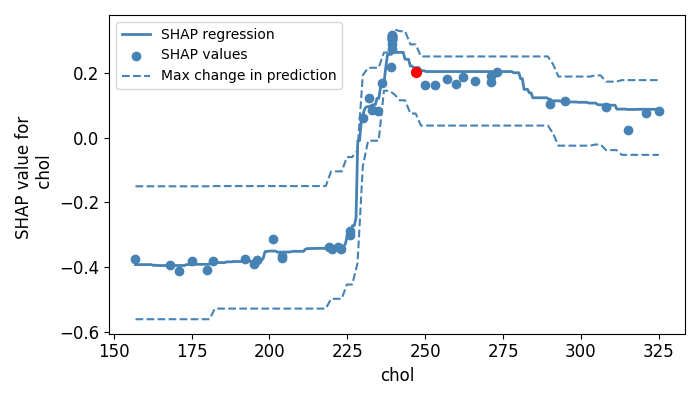}
\end{tcolorbox}
\end{minipage}\hfill
\begin{minipage}[t]{0.48\linewidth}
\begin{tcolorbox}[
    cardstyle,
    title={Feature: Fasting Blood Sugar}
]
\includegraphics[width=\linewidth]{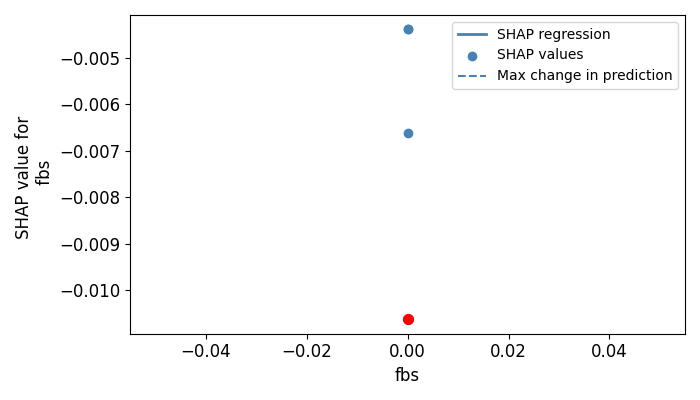}
\end{tcolorbox}
\end{minipage}

\end{tcolorbox}
\end{figure*}

\begin{figure*}
\tcbset{
  cardstyle/.style={
    enhanced,
    colback=white,
    colframe=black!60,
    boxrule=0.5pt,
    sharp corners,
  }
}

\begin{tcolorbox}[
    cardstyle,
    title={Explanation Card for SHAP Explanations in Medical Diagnosis, Targeting the Doctor (Part 2)},
    width=\textwidth
]

\begin{minipage}[t]{0.48\linewidth}
\begin{tcolorbox}[
    cardstyle,
    title={Feature: Resting Electrocardiographic Results}
]
\includegraphics[width=\linewidth]{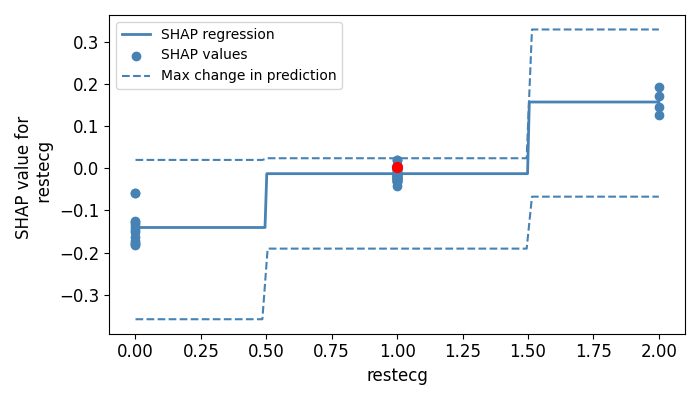}
\end{tcolorbox}
\end{minipage}\hfill
\begin{minipage}[t]{0.48\linewidth}
\begin{tcolorbox}[
    cardstyle,
    title={Feature: Maximum Heart Rate}
]
\includegraphics[width=\linewidth]{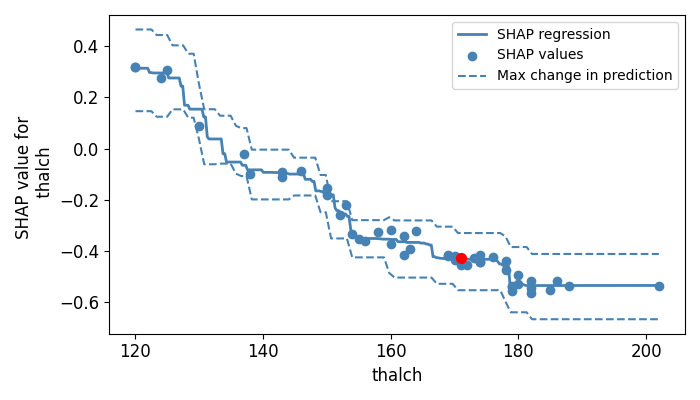}
\end{tcolorbox}
\end{minipage}

\begin{minipage}[t]{0.48\linewidth}
\begin{tcolorbox}[
    cardstyle,
    title={Feature: Exercise-induced Angina}
]
\includegraphics[width=\linewidth]{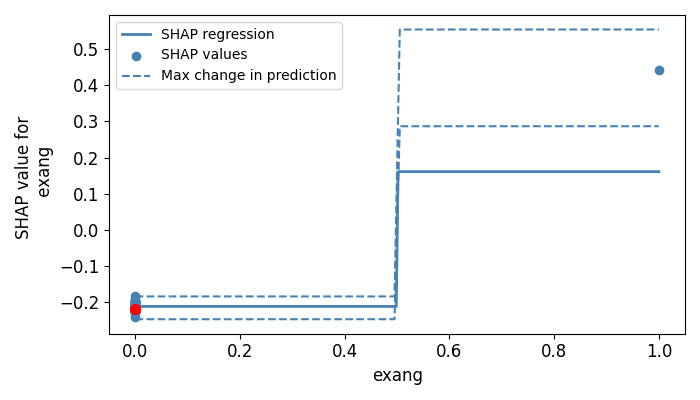}
\end{tcolorbox}
\end{minipage}\hfill
\begin{minipage}[t]{0.48\linewidth}
\begin{tcolorbox}[
    cardstyle,
    title={Feature: ST Depression}
]
\includegraphics[width=\linewidth]{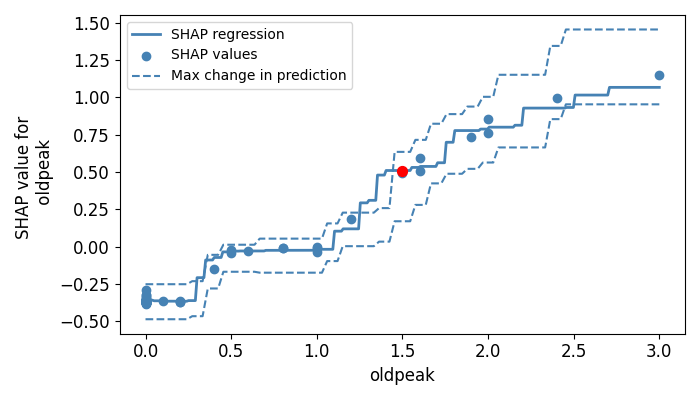}
\end{tcolorbox}
\end{minipage}

\begin{minipage}[t]{0.48\linewidth}
\begin{tcolorbox}[
    cardstyle,
    title={Feature: ST Slope}
]
\includegraphics[width=\linewidth]{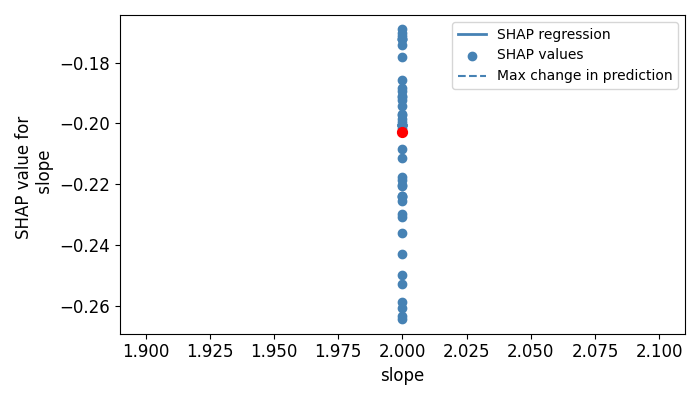}
\end{tcolorbox}
\end{minipage}\hfill
\begin{minipage}[t]{0.48\linewidth}
\begin{tcolorbox}[
    cardstyle,
    title={Feature: Number of Major Vessels}
]
\includegraphics[width=\linewidth]{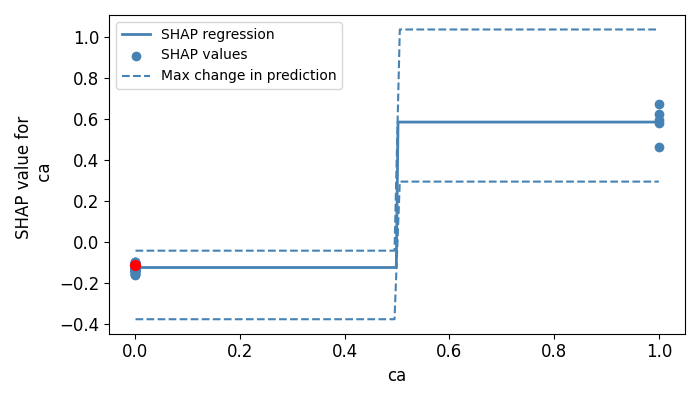}
\end{tcolorbox}
\end{minipage}

\begin{minipage}[t]{0.48\linewidth}
\begin{tcolorbox}[
    cardstyle,
    title={Feature: Thalassemia}
]
\includegraphics[width=\linewidth]{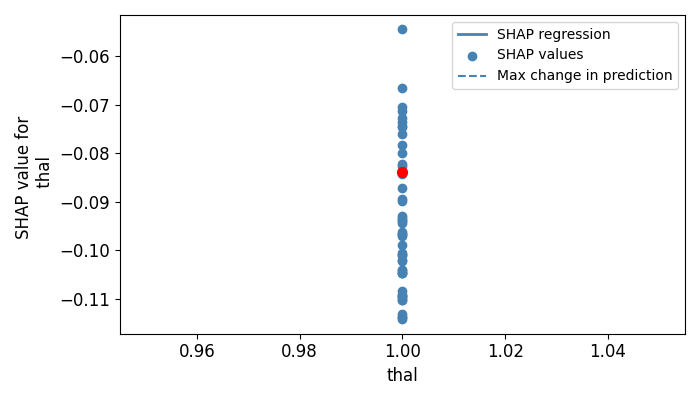}
\end{tcolorbox}
\end{minipage}

\end{tcolorbox}

\caption{\legendbox{color1} Machine decision and SHAP value. \;
\legendbox{color2} Local region. \;
\legendbox{color3} Feature effect on prediction. \\ \\
{\bf Explanation Card for a SHAP Explanation for a doctor debugging a prediction:} 
Here, we see how the explanation card could look like if all features are included.}
\label{fig:explanation_card_shap_good_full}
\end{figure*}

\begin{figure*}[t]
\tcbset{
  cardstyle/.style={
    enhanced,
    colback=white,
    colframe=black!60,
    boxrule=0.5pt,
    sharp corners,
  }
}
\begin{tcolorbox}[
    cardstyle,
    title={Explanation Card for SHAP Explanations in Medical Diagnosis, Targeting the Doctor},
    width=\textwidth
]

    \hlA{The current patient is classified as healthy, their probability of being sick is 0.31. This probability is considered in the following.} \\[0.5em]
    \hlB{The analysis was done in the following local region around the patient:\\% and the range of the regarding feature is colored in the plot.
    }
    \begin{tabularx}{\linewidth}{rXrXrX}
        \hlB{Age:} &\hlB{29-62} & \hlB{Sex:} & \hlB{Male} & \hlB{Resting BP:} & \hlB{112-160 mmHG}\\
        \hlB{Cholesterol:} & \hlB{157-325 mg/dl} & \hlB{Fasting blood sugar:} & \hlB{$< 120$ mg/dl} & \hlB{Max heart rate:}  & \hlB{120-202}\\
        \hlB{ST depression:} & \hlB{0-3} & \hlB{ST slope:} & \hlB{upsloping or flat} 
    \end{tabularx}
\tcblower
\begin{tcolorbox}[
    cardstyle,
    title={Feature: Age},
    sidebyside,
    sidebyside gap=3mm,
    righthand width=0.55\textwidth
]
\includegraphics[width=\linewidth]{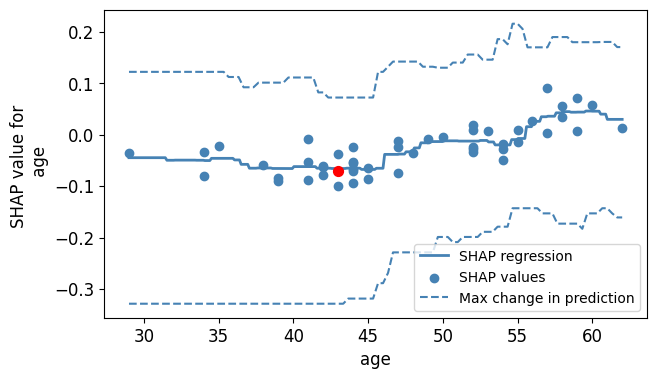}
\tcblower
\hlA{SHAP attributes a value of $-0.07$ to the patient's age ($43$), see red dot in the plot.}\\[0.5em]
\hlC{The SHAP values or other patients in the local region (blue dots) and their regression (blue line) roughly reflect how age affects the prediction. The dotted lines show the maximal/minimal change in prediction if age is varied.} %
\end{tcolorbox}
\end{tcolorbox}
\caption{\legendbox{color1} Machine decision and SHAP value. \;
\legendbox{color2} Interpretable region. \;
\legendbox{color3} Feature effect on prediction. \\ \\
{\bf Explanation Card for a SHAP Explanation for a doctor debugging a prediction:} Here, the explanation card simply reveals that the SHAP dependence plot cannot be related to the prediction behavior: The envelope defined by the dotted lines is very large.}
\label{fig:explanation_card_shap_bad}
\end{figure*}

\section{Data, Models, and Implementation Details}

\subsection{Counterfactual Explanations}

This section documents the data sources, models, and computational procedures used in the experiments for the section on counterfactual explanation, with the goal of clarifying methodological choices.

\paragraph{Datasets.} Two primary datasets were used. First, we used the ACSIncome dataset from the Folktables benchmark \citep{ding2021retiring}, derived from the American Community Survey (ACS). Data were obtained using the folktables interface via ACSDataSource, with the person survey, 1‑Year horizon, for California (CA) in 2018. The prediction task is binary income classification, where the label indicates whether an individual’s income exceeds the threshold defined by ACSIncome. Features include demographic and socioeconomic attributes such as age, hours worked per week, education level, marital status, and related categorical and ordinal variables.

Second, we used the South German Credit dataset \citep{south_german_credit_522, gromping2019south}. This dataset contains financial and demographic attributes (e.g., credit amount, duration, age, employment duration, housing status), with a binary credit risk label. 

The datasets were primarily used to illustrate counterfactual reasoning and stability region computations around data points and their counterfactual explanations. The data were split into training and test sets using a standard random train–test split.

\paragraph{Models.}

We conducted experiments using two types of models.

\begin{itemize}
    \item Decision tree classifier: Used for visualization, as well as for counterfactual and stability‑region analysis. Decision trees were particularly important because their axis‑aligned splits allow explicit extraction of decision path constraints and leaf‑level feature bounds.
    \item Random forest classifier: Used mainly for visualization purposes.
\end{itemize}

\paragraph{Counterfactual Computation.} Counterfactual explanations were generated using the DiCE (Diverse Counterfactual Explanations) library \citep{mothilal2020dice}. They were obtained using the genetic algorithm–based method in DiCE, typically requesting a single counterfactual that flips the predicted class from negative to positive while respecting feature constraints. The resulting counterfactual instance serves as the target point for subsequent stability analysis.

\paragraph{Stability Analysis.} Regions of stability were defined relative to a decision tree model and a reference point. For a given input, the tree’s decision path was extracted. Each internal node on the path induces a constraint of the form feature $\leq$ threshold or feature $>$ threshold. Aggregating these constraints yields an axis‑aligned box in the feature space.

Two types of safe regions were considered:
\begin{itemize}
    \item Region of stability: For a counterfactual explanation, the region of stability was obtained by taking the positive leaf of the decision tree to which the counterfactual belongs. Any perturbation within this region preserves the positive prediction.
    \item Region of stability around the input: For a negatively classified original instance, a stable region is obtained by taking the region corresponding to the negative leaf of the data point and intersecting it with the stability region of the counterfactual shifted by the vector $(x_0-x_C)$. Data points within this region receive negative predictions, and the same counterfactual action $(x_C-x_0)$ works for them.
\end{itemize}
When analyzing transitions between an original negative point and a positive counterfactual, safe regions were further restricted to only investigating features that a person might possibly change.

\paragraph{Visualization and Plots.} To visualize decision boundaries and safe regions, two features of interest (age and educational attainment) were selected. All other features were fixed at their median values. Original points, counterfactuals, and safe regions were overlaid to illustrate stability and local stability geometrically. In Figure \ref{fig:random_forest_vs_decision_tree}, the middle panel depicts the result of a decision tree of depth 10 trained on ACSIncome. The right panel is the result of a standard random forest classifier with 20 trees trained on the same data. 

\paragraph{Additional Implementation Details.} All experiments were implemented in Python using standard scientific libraries (NumPy, pandas, scikit‑learn, Matplotlib), along with specialized packages such as Folktables and DiCE.

\subsection{SHAP Explanations}
\label{app:implementation-shap}

\paragraph{Datasets.} The dataset used throughout the SHAP section is the UCI heart disease dataset \citep{heart_disease_45}, downloaded from kaggle \citep{heart_disease_uci_kaggle}, where the data from all hospitals is merged into one dataset. The label of the dataset is either $0$ (no heart disease) or one of the numbers $1,2,3,4$ (state of present heart disease). We converted any of the positive integers into a $1$ to make the problem a binary classification task with labels 0 (healthy) and 1 (sick). Missing values were replaced by the median (if numeric) or the most frequent feature (if categorical). Furthermore, we dropped the features ``dataset'' and ``id'' as they only indicate the hospital, respectively, the ID of the patient. The data was split into a training and a test set using a standard random train-test split.

\paragraph{Models.}
For creating Figures \ref{fig:shap-dp}, \ref{fig:shap-illustration}, \ref{fig:explanation_card_shap_good}, \ref{fig:SHAP-DP bad with envelope}, \ref{fig:explanation_card_shap_good_full} and \ref{fig:explanation_card_shap_bad}, we used 3 different models. A random forest with 100 trees of unlimited depth gave rise to the unstable plot in Figure \ref{fig:SHAP-DP chaotic} as well as the negative card in Figure \ref{fig:explanation_card_shap_bad}. However, each plot utilized a different point with its neighbors. We furthermore trained a boost with 100 trees of depth 3, leading to Figure \ref{fig:shap-dp}, where we plotted a dependence plot of all data points. The same model was used for the plots \ref{fig:SHAP-DP bad ICE curve} and \ref{fig:SHAP-DP-angina}, as well as the plots in our explanation cards in Figure \ref{fig:explanation_card_shap_good} and \ref{fig:explanation_card_shap_good_full}. There, we only considered a data point with its 50 nearest neighbors. The additive model represented in Figure \ref{fig:SHAP-DP GAM} was obtained by training a catboost with 300 trees of depth 1.\\

\paragraph{Computation of SHAP.}
For all SHAP values that were computed in this paper, we used the official package SHAP \citep{Lundberg17}. More concretely, we used TreeExplainer in its interventional version (which is the default when a background dataset is given) on the raw model output. 

\paragraph{Explanation Card.} 
For creating the explanation card, we first pick the 50 nearest neighbors of
the point of interest and compute their SHAP values. The neighbors are
determined with respect to the maximum norm; as such, they define a
rectangle, which we use as our local region $U$. 
Within the region $U$, we plot, for every feature, the SHAP values of the neighbors in a plot depending on their regarding feature, where the individual instance is highlighted. We regress each SHAP value $\Phi_j$ with $X^{(j)}$, using a random forest with 100 trees and a maximal depth of 3. This leads to components of an approximate additive model, which we plotted as the blue line in the plot of the explanation card. To get the dotted lines that represent a lower and upper bound, we proceed as follows: For each of the points in $U$, we vary the feature within the range that $U$ dictates and consider the resulting prediction. This leads to a curve $x^{(j)}\mapsto f(x^{(j)}, x_i^{(\bar{j})})$ for a point $x_i$. We shift the curve such that it goes through the regarding point in our SHAP plot, that is, we consider the curve $c_i:x^{(j)}\mapsto f(x^{(j)}, x_i^{(\bar{j})}) - f(x_i)+\Phi_j(x_i)$. As our dotted lines, we then plot the maximum, respectively, the minimum of these curves over all datapoints in $U$. Mathematically, these are the curves $x^{(j)}\mapsto \max_{x_i\in U} c_i(x^{(j)})$ and $x^{(j)}\mapsto \min_{x_i\in U} c_i(x^{(j)})$. That means that if we take any of the datapoints in $U$, vary its feature $j$, then the change in prediction lies in between the two dotted lines.  \\
With the plot, we deliver short phrases explaining what can be derived from the
plot, including the prediction and SHAP value of the instance, the feature ranges of the rectangle $U$ as well as a an explanation of the plot. Due to space constraints, we only present this for the feature ``age'' as an example.

\paragraph{Additional Implementation Details.} All experiments were implemented in Python using standard scientific libraries (NumPy, pandas, scikit‑learn, Matplotlib), along with specialized packages such as Catboost and SHAP.

\end{document}

%% file: latex_definitions/local_latex_stuff.tex
\usepackage{caption,subcaption}
\usepackage[most]{tcolorbox}
\usepackage[dvipsnames]{xcolor}
\usepackage{soul}
\usepackage{hyperref}
\hypersetup{
    colorlinks=true,
    allcolors=black
}
\usepackage{tabularx}

\usepackage{placeins} % makes sure the figure stay within the section, used in appendix with command \FloatBarrier

\usepackage{comment}
\usepackage{amsthm,amsmath,amsfonts,amssymb,mathtools,stmaryrd,mleftright}

\newtheorem{theorem}{Theorem} 
\newtheorem{definition}[theorem]{Definition}

\definecolor{color1}{RGB}{238, 221, 221}  % dusty rose
\definecolor{color2}{RGB}{221, 235, 221}  % sage
\definecolor{color3}{RGB}{221, 231, 241}  % steel blue
\definecolor{color4}{RGB}{245, 241, 221}  % warm cream
\definecolor{color5}{RGB}{235, 225, 238}  % muted mauve

\sethlcolor{color1}
\newcommand{\hlA}[1]{\sethlcolor{color1}\hl{#1}}
\newcommand{\hlB}[1]{\sethlcolor{color2}\hl{#1}}
\newcommand{\hlC}[1]{\sethlcolor{color3}\hl{#1}}
\newcommand{\hlD}[1]{\sethlcolor{color4}\hl{#1}}

\newcommand{\legendbox}[1]{\colorbox{#1}{\strut\hspace{0.8em}}}

\DeclareMathOperator{\E}{E}
\DeclareMathOperator{\Var}{Var}

\newcommand{\eps}{\varepsilon}

\newcommand{\R}{\mathbb{R}}

\newcommand{\Lcal}{\mathcal{L}}
\newcommand{\Fcal}{\mathcal{F}}
\newcommand{\Ecal}{\mathcal{E}}
\newcommand{\Xcal}{\mathcal{X}}